%% file: main_ext.tex
\documentclass[sigconf, nonacm]{acmart}

\usepackage{pvldb}

\renewcommand\vldbdoi{XX.XX/XXX.XX}
\renewcommand\vldbpages{XXX-XXX}
\renewcommand\vldbavailabilityurl{https://github.com/bigbases/CoPE-CGL} 

\usepackage[utf8]{inputenc}
\usepackage[T1]{fontenc}

\usepackage{hyperref}
\usepackage{url}
\usepackage{booktabs}
\usepackage{amsfonts}
\usepackage{nicefrac}
\usepackage{microtype}
\usepackage{xcolor}
\usepackage{color}
\usepackage{kotex}
\usepackage{multirow}
\usepackage{textcomp}
\usepackage{comment}
\usepackage{caption}
\usepackage{subcaption}
\usepackage{algorithm}
\usepackage{algpseudocode}
\usepackage{makecell}
\usepackage{xspace}
\usepackage{enumitem}
\usepackage{amsmath}
\usepackage{amsthm}
\usepackage{algorithmicx}
\usepackage[normalem]{ulem}
\usepackage{graphicx}
\usepackage{wrapfig}
\usepackage{balance}

\newtheorem{proposition}{Proposition}
\newtheorem{corollary}{Corollary}

\newcommand{\model}{CoPE-CGL\xspace}

\newif\ifrevmarks \revmarksfalse
\ifrevmarks
  
  \DeclareRobustCommand{\pd}[1]{\textcolor[rgb]{0.5,0.0,0.5}{\relax#1}}
\else
  
  \DeclareRobustCommand{\pd}[1]{#1}
\fi

\begin{document}
\title{CoPE-CGL: Community-aware Prototype Extraction for Replay-Free Continual Graph Learning (Extended Version)}

\author{Bo-Young Lim}
\orcid{0000-0002-3325-7545}
\affiliation{%
  \department{Graduate School of Data Science}
  \institution{Seoul National University of Science and Technology}
  \city{Seoul}
  \country{South Korea}}
\email{esob618@seoultech.ac.kr}

\author{Seokhwan Kim}
\orcid{0009-0003-4865-6981}
\affiliation{%
  \department{Graduate School of Data Science}
  \institution{Seoul National University of Science and Technology}
  \city{Seoul}
  \country{South Korea}}
\email{kimauve@seoultech.ac.kr}

\author{Kisung Lee}
\orcid{0000-0003-4367-4374}
\affiliation{%
 \department{Division of Computer Science and Engineering}
 \institution{Louisiana State University}
 \city{Baton Rouge}
 \state{Louisiana}
 \country{USA}}
 \email{klee76@lsu.edu}

\author{Hyuk-Yoon Kwon}
\authornote{Corresponding author}
\orcid{0000-0002-1125-6533}
\affiliation{%
  \department{Graduate School of Data Science}
  \institution{Seoul National University of Science and Technology}
  \city{Seoul}
  \country{South Korea}}
\email{hyukyoon.kwon@seoultech.ac.kr}

\begin{abstract}
Graphs managed by data systems evolve: new subgraphs and node classes keep arriving, and a model learned from them must be maintained without retraining from scratch or retaining all past graph data. Continual graph learning (CGL) addresses this problem while mitigating catastrophic forgetting, but existing methods are typically tailored to a single evaluation setting, either transductive (disjoint node classes within a shared graph) or inductive (task subgraphs that share nodes, edges, and classes with earlier tasks). They also commonly store raw past subgraphs in replay buffers or require task identifiers at inference time. We propose \textbf{Co}mmunity-aware \textbf{P}rototype \textbf{E}xtraction for \textbf{C}ontinual \textbf{G}raph \textbf{L}earning (simply, \textbf{\model}), a replay-free CGL framework in which a parameter-free structural prior, precomputed by multi-hop Laplacian smoothing, anchors learning, memory, and inference. \model has three components: (i)~\textit{topology-infused bottleneck adaptation} (TIBA), a lightweight adapter trained on the frozen prior; (ii)~\textit{community-aware prototype extraction}, which distills each task into a few non-learned community-level prototypes; and (iii)~\textit{graph-level dynamic routing}, which infers the task of a test graph by a majority vote of its nodes against the prototype memory. As with prior task-free methods, the default protocol assumes that each test graph comes from a single learned task; a node-level variant relaxes this assumption. Each task adds at most a constant amount of storage (an adapter and a few prototypes), and our model keeps no copy of past graph data. At equal storage and without task identifiers, \model matches the average performance (AP) of the strongest replay baseline even when that baseline is given the true task of each test graph (AP of 95.17\% and 95.72\% for \model\ vs.\ 94.28\% and 95.82\% for the baseline on Arxiv and CoraFull), while running $5.9$--$14.0\times$ faster with zero forgetting. On four transductive and three inductive benchmarks and on mixed streams that interleave the two, \model outperforms the second-best of up to eleven baselines by up to 1.41, 6.32, and 9.33 percentage points of AP, respectively, runs 1.3--7.9$\times$ faster end-to-end than the most accurate competitor on each benchmark, and scales to the largest benchmark graph. On test graphs that mix two tasks, its node-level variant gives the best AP among CGL methods. The design targets homophilic graphs with community structure.
\end{abstract}

\maketitle

\noindent\textbf{Artifact availability.} The source code, data, and other artifacts are available at \url{https://github.com/bigbases/CoPE-CGL}.


\input{section/ch1_introduction}
\input{section/ch2_problem_definition}

\input{section/ch3_proposed_method}

\input{section/ch4_experiments}

\input{section/ch5_related_works}

\input{section/ch6_conclusion}


\bibliographystyle{ACM-Reference-Format}
\bibliography{main_ext}

\appendix

\section{Dataset and Implementation Details}
\label{appsec:datasets}

\subsection{Dataset Details}

This appendix expands on the benchmarks summarized in Section~4.1 by giving per-dataset feature and construction details. Per-dataset statistics are reported in Table~1.

\noindent\textbf{Transductive benchmarks (CGLB protocol~\cite{zhang2023cglb}).}
\textbf{CoraFull} and \textbf{Arxiv} are citation networks with 70 and 40 classes, respectively; CoraFull's node features are bag-of-words vectors of paper abstracts, and Arxiv's node features are averaged word embeddings from paper titles and abstracts. \textbf{Reddit} is a social-network graph in which nodes are posts grouped by subreddit community, edges represent shared commenters, and features are derived from post metadata. \textbf{Products} is an Amazon co-purchasing network in which nodes are items, edges are frequent co-purchases, and features are dimensionality-reduced bag-of-words representations from product descriptions.

\noindent\textbf{Inductive benchmarks (TACO protocol~\cite{han2024taco}).}
\textbf{Kindle} is an Amazon co-purchasing network spanning six item categories; \textbf{DBLP} and \textbf{ACM} are academic citation graphs covering four research domains each. Following the TACO protocol, the global graphs are partitioned into sequential task subgraphs by node timestamp, so edges point only to past entities, and subsets of classes are dynamically masked at each task to simulate evolving label spaces.

\noindent\textbf{Mixed-setting streams.}
Mixed-setting datasets are constructed on top of the seven benchmarks using the randomized scheduler described below (Randomized alternating scheduler): the transductive-to-inductive occurrence ratio is configurable, the task sequence of CoraFull, Arxiv, Reddit, and Products is randomly permuted for each independent run, whereas Kindle, DBLP, and ACM read fixed pre-built streams, and overlapping nodes for inductive tasks are uniformly sampled from the historical node pool at a predefined rate. The first task is always transductive to establish base knowledge.

\noindent\textbf{Randomized alternating scheduler.} To prevent overfitting to a deterministic task progression, the regime of each task is drawn from a randomized scheduler. The first task ($k{=}1$) consists entirely of novel nodes and is fixed to the transductive setting $\mathcal{T}_{trans}$; for $k>1$, its regime $R_k \in \{\mathcal{T}_{trans}, \mathcal{T}_{ind}\}$ is drawn from a Bernoulli distribution with $P(R_k{=}\mathcal{T}_{ind}) = \rho \in [0,1]$, and the resulting type sequence is permuted independently per run. This ensures the model must handle arbitrary orderings of transductive and inductive tasks without prior knowledge of the schedule.

\subsection{Implementation Details}

\noindent\textbf{Evaluation metrics.} Average Performance (AP$\uparrow$) measures overall accuracy after all tasks; Backward Transfer (BWT$\uparrow$) measures how much training on later tasks impacts earlier ones (higher is better):
\[
\text{AP} = \frac{1}{T} \sum_{j=1}^{T} a_{T,j}, \qquad
\text{BWT} = \frac{1}{T-1} \sum_{j=1}^{T-1}( a_{T,j} - a_{j,j}),
\]
where $T$ is the total number of tasks and $a_{i,j}$ is the model's score on the test set of task $j$ after training on task $i$ (the classification accuracy, in \%, on all seven benchmarks and for every method, including TACO, whose original paper reports F1).

\noindent\textbf{Seeds and run-to-run variation.} The seeds control parameter initialization, shuffling, mini-batching, and baseline-specific sampling, and every method sees the same task order for a given seed, while dataset splits, class order, and hyperparameters are fixed. A seed does not reproduce \model exactly, even with PyTorch deterministic algorithms: ten re-runs of one Arxiv seed on an RTX A4000 give AP between 95.13\% and 95.24\%.

\noindent\textbf{Replay budgets and codebases.} For replay-based baselines, the memory budget scales with dataset size: 100/200/300/400 for CoraFull/Arxiv/Reddit/Products in the transductive setting, and 100/200/300 for Kindle/DBLP/ACM in the inductive setting. EWC~\cite{Kirkpatrick2017ewc}, MAS~\cite{Aljundi2018mas}, TWP~\cite{liu2021twp}, ER-GNN~\cite{zhou2021ergnn}, SSM~\cite{zhang2022ssm}, CaT~\cite{liu2023cat}, Finetuning, and Joint are run with the CaT codebase (\url{https://github.com/superallen13/CaT-CGL.git}), which builds on the CGLB benchmark (\url{https://github.com/QueuQ/CGLB.git}). Joint uses the same two-layer SGC~\cite{wu2019sgc} and, after each task, trains it further for 500 epochs on the training nodes of all tasks seen so far, with one classifier over all classes seen so far. On Products each epoch takes one update per task graph, as in CaT's batch mode; on the other datasets it takes one update on the union of the task graphs. Its running time also includes extra routing and evaluation passes. TACO is run with its official code (\url{https://github.com/hanxiaoxue114/TACO.git}). TPP (\url{https://github.com/mala-lab/TPP.git}), TAAM (\url{https://github.com/1iuJT/TAAM_AAMAS2026}), CAGNN~\cite{xiao2026cagnn} (\url{https://github.com/WonderHeiYi/CAGNN.git}), and SAFER~\cite{zhang2026safer} (\url{https://github.com/ZhangWeichao0824/SAFER.git}) are ported from their official code into our pipeline, so that all methods share the same task streams and evaluation; TWP, CAGNN, and SAFER keep the backbones their methods are defined on (Section~4.1). Like TPP and TAAM, CAGNN and SAFER receive no task identifier at test time: CAGNN assigns a test graph to the task whose prototype of SGC-smoothed features is nearest, as TPP does, and SAFER to the task with the smallest subspace distance. SAFER's open-set rejection is not ported, so every test graph is assigned to a learned task. Replay budgets are per task.
The datasets are split into training, validation, and test sets in a 6:2:2 ratio. We use the Adam optimizer~\cite{kingma2015adam} for all experiments, and all baseline methods use their original hyperparameter settings.

\noindent\textbf{Hyperparameters of \model.} Table~\ref{tab:hparams} lists every hyperparameter of \model. A single configuration is shared by all seven benchmarks and all three settings (transductive, inductive, and mixed); no value is tuned per dataset. The bottleneck width $b$ is not searched but set by a fixed rule of the input feature dimension $d$, $b=\min(\max(\lfloor d/4\rfloor,16),128)$, which gives $b{=}128$ (CoraFull, $d{=}8710$), 32 (Arxiv, 128), 128 (Reddit, 602), 25 (Products, 100), 114 (Kindle, 457), 61 (DBLP, 244), and 128 (ACM, 1018). The adapter's up-projection is zero-initialized so that the adapter starts as the identity map on $Z_{base}$. The learning rate and weight decay are the same as those of the SGC-based baselines (e.g., Finetuning and EWC), and the number of epochs follows the common protocol of all methods.
\textit{Selection procedure.} We deliberately performed no hyperparameter search for \model: all values were fixed before any experiment was run and were never adjusted on validation or test performance. The optimization settings are inherited from the SGC-based baselines, $b$ follows the dimension rule above, and $K{=}2$ and $M{=}10$ are conventional defaults. Consequently, every number reported for \model comes from a single untuned configuration, whereas the baselines use settings tuned by their original authors; this choice also avoids selection bias from tuning on the evaluation data.
Louvain~\cite{blondel2008fast} partitioning shuffles the node visiting order with the Python random number generator, so the Louvain seed equals the run seed and the reported standard deviations already include the variation of the community partition.

\begin{table}[t]
\centering
\caption{Hyperparameters of \model, shared by all datasets and settings.}
\label{tab:hparams}
\resizebox{\columnwidth}{!}{%
\begin{tabular}{lll}
\toprule
Component & Hyperparameter & Value \\
\midrule
Structural prior & Propagation hops $K$ & 2 ($\hat{A}$ sym.-normalized, self-loops) \\
\midrule
\multirow{3}{*}{TIBA adapter} & Bottleneck width $b$ & $\min(\max(\lfloor d/4 \rfloor, 16), 128)$ \\
 & Residual scale $\alpha$ & 1.0 (up-projection zero-initialized) \\
 & Classifier head & linear, Xavier-uniform init. \\
\midrule
\multirow{4}{*}{Optimization} & Optimizer & Adam, full batch \\
 & Learning rate & 0.01 \\
 & Weight decay & $5\times10^{-4}$ \\
 & Epochs per task $e$ & 500 \\
\midrule
\multirow{3}{*}{Prototypes} & Community detection & Louvain (resolution 1, unweighted) \\
 & Prototypes per task $M$ & 10 (largest communities by \#train nodes) \\
 & Louvain seed & run seed \\
\midrule
Routing & Similarity / vote & cosine max per task; graph-level majority \\
\bottomrule
\end{tabular}}
\end{table}

\section{Algorithm}
\label{sec:algorithm}

Algorithm~\ref{alg:sage_cog} details the training and inference procedures of \model. The framework operates in two phases: continual training with parameter-free projection and task-free inference via dynamic routing.

\noindent\textbf{Phase 1: Continual training.} For each incoming task $t$, the framework first computes the symmetric normalized adjacency matrix $\hat{A}_t$ with self-loops and performs static $K$-hop Laplacian smoothing to extract the parameter-free base embedding $Z_{base}$ (lines 4--6). A task-specific bottleneck adapter $\theta^t$ and classification head $\phi^t$ are then initialized, with the adapter's up-projection layer zero-initialized so that $Z_{adapted}\!=\!Z_{base}$ at the start of training (line 7). The adapter and head are optimized for $e$ epochs via a simple MLP step over the precomputed $Z_{base}$ (lines 8--12), reducing per-epoch cost to matrix multiplications. After training converges, the task graph is partitioned into disjoint communities, and the $M$ communities with the most training nodes $\tilde{C}$ are selected to filter out structural noise (lines 13--14). For each $c_m \in \tilde{C}$, a prototype is obtained by averaging the precomputed base embeddings of its training nodes (lines 16--19), and the resulting prototype set $\mathcal{P}^t$ is stored in the prototype memory $\mathcal{P}$, bypassing the need to retain raw nodes or edges (line 20).

\noindent\textbf{Phase 2: Task-free inference.} For a test graph $G_{test}$ with unknown task identity, the raw features are projected into the structural space through the same parameter-free SGC operation (lines 23--24). This parameter-free projection avoids the feature distortion that would arise from passing a later task's graph through a backbone trained on earlier tasks. \model then computes, for each test node, the maximum cosine similarity against the prototypes of each past task (lines 25--27), and a graph-level majority vote over per-node predictions yields the inferred task identity $\hat{t}$ (line 28). A zero vector keeps norm $0$ after $\ell_2$-normalization, so its cosine similarity with every prototype is $0$ and the node still votes; both $\arg\max$ return the first maximum, i.e., the earliest task. A test node whose embedding is zero therefore votes for the first task. This holds for 212 test nodes on Products, where graph-level routing nevertheless selects the correct task for every test graph. Finally, $Z_{test}$ is dynamically routed to the matching task-specific adapter $\theta^{\hat{t}}$ and classifier head $\phi^{\hat{t}}$ to produce the predicted labels $\hat{Y}$ (lines 29--32).

\begin{algorithm}[!ht]
\caption{Overall Framework of \model}
\label{alg:sage_cog}
\footnotesize
\begin{algorithmic}[1]
\renewcommand{\algorithmicrequire}{\textbf{Input:}}
\renewcommand{\algorithmicensure}{\textbf{Output:}}

\makeatletter
\algnewcommand{\LineComment}[1]{\Statex \hskip\ALG@thistlm $\triangleright$\ \textit{#1}}
\makeatother

\Require A sequence of continual graph tasks $\mathcal{T} = \{G_1, G_2, \ldots, G_T\}$, SGC aggregation hops $K$, the maximum number of prototypes per task $M$
\Ensure Predicted labels $\hat{Y}$ for a given test graph $G_{test}$

\State Initialize prototype memory $\mathcal{P} \leftarrow \emptyset$
\State Initialize adapter memory $\Theta \leftarrow \emptyset$, and classifier head memory $\Phi \leftarrow \emptyset$

\LineComment{\textbf{Phase 1: Continual Training}}
\For{each task $t \in \{1, 2, \ldots, T\}$}
    \State Obtain the current task graph $G_t = (V_t, E_t)$ with node features $X_t$ and labels $Y_t$
    \LineComment{1. Topology-Infused Bottleneck Adaptation}
    \State Compute symmetric normalized adjacency matrix $\hat{A}_t$ from $E_t$ with self-loops
    \State $Z_{base} \leftarrow \hat{A}_t^K X_t$ \Comment{\textit{static precomputation}}
    \State Initialize bottleneck adapter $\theta^t$ (with zero-initialized up-projection) and head $\phi^t$
    \For{$epoch = 1$ to $e$}
        \State $Z_{adapted} \leftarrow Z_{base} + \alpha \cdot \text{Adapter}_{\theta^t}(Z_{base})$
        \State Optimize $\theta^t$ and $\phi^t$ via Cross-Entropy Loss using $Y_t$
    \EndFor
    \State $\Theta \leftarrow \Theta \cup \{\theta^t\}$, $\Phi \leftarrow \Phi \cup \{\phi^t\}$

    \LineComment{2. Community-aware Prototype Extraction}
    \State Execute community detection on $E_t$ to obtain communities $C = \{c_1, \ldots, c_{|C|}\}$
    \State Select at most $M$ communities with the most training nodes, $\tilde{C} \subseteq C$ (communities without training nodes are dropped)
    \State Initialize task prototype set $\mathcal{P}^t \leftarrow \emptyset$
    \For{each $c_m \in \tilde{C}$} \Comment{\textit{parallel}}
        \State $P_m^t \leftarrow \frac{1}{|c_m^{tr}|} \sum_{v \in c_m^{tr}} Z_{base}^{(v)}$ \Comment{\textit{training nodes of $c_m$}}
        \State $\mathcal{P}^t \leftarrow \mathcal{P}^t \cup \{P_m^t\}$
    \EndFor
    \State $\mathcal{P} \leftarrow \mathcal{P} \cup \{\mathcal{P}^t\}$ \Comment{\textit{store prototype memory}}
\EndFor

\LineComment{}
\LineComment{\textbf{Phase 2: Task-free Inference and Dynamic Routing}}
\Function{INFERENCE}{$G_{test} = (\tilde{V}_{test}, E_{test}, X_{test})$, test nodes $V_{test} \subseteq \tilde{V}_{test}$}
    \State $\hat{A}_{test} \leftarrow$ Compute normalized adjacency from $E_{test}$
    \State $Z_{test} \leftarrow \hat{A}_{test}^K X_{test}$ \Comment{\textit{parameter-free projection}}

    \LineComment{1. Task Identification via Majority Voting}
    \For{each node $v \in V_{test}$}
        \State $\hat{t}_v \leftarrow \operatorname*{arg\,max}_{t} \left( \max_{P_m^t \in \mathcal{P}^t} \frac{Z_{test}^{(v)} \cdot P_m^t}{\|Z_{test}^{(v)}\|_2 \|P_m^t\|_2} \right)$
    \EndFor
    \State $\hat{t} \leftarrow \operatorname*{arg\,max}_{t} \big|\{ v \in V_{test} : \hat{t}_v = t \}\big|$ \Comment{\textit{graph-level consensus}}

    \LineComment{2. Dynamic Routing and Classification}
    \State Retrieve task-specific adapter $\theta^{\hat{t}} \in \Theta$ and classifier head $\phi^{\hat{t}} \in \Phi$
    \State $Z_{adapted} \leftarrow Z_{test} + \alpha \cdot \text{Adapter}_{\theta^{\hat{t}}}(Z_{test})$
    \State $\hat{Y} \leftarrow \operatorname*{arg\,max} \text{Head}_{\phi^{\hat{t}}}(Z_{adapted})$ \Comment{\textit{row-wise}}
    \State \textbf{return} $\hat{Y}$
\EndFunction
\end{algorithmic}
\end{algorithm}

\section{Formal Statements and Proofs}
\label{appsec:proofs}

This appendix formalizes the three propositions stated in Section~3 and provides their proofs or proof sketches. The results rely on standard tools (spectral decomposition, the triangle inequality, and Hoeffding's inequality); they are included to justify three design choices of \model: a precomputed low-pass prior, a community-level rather than single-centroid task memory with max-over-prototypes matching, and a graph-level vote over node-wise routing decisions.

\subsection{Proof of Proposition~1 (Laplacian Low-Pass Filtering)}
\label{appsec:proof_laplacian}

Let $\hat{A} = \tilde{D}^{-1/2}\tilde{A}\tilde{D}^{-1/2}$ denote the symmetric normalized adjacency with self-loops of a graph with $n$ nodes. Because $\hat{A}$ is symmetric (and similar, via $\hat{A}=\tilde{D}^{1/2}(\tilde{D}^{-1}\tilde{A})\tilde{D}^{-1/2}$, to the row-stochastic matrix $\tilde{D}^{-1}\tilde{A}$), it admits an orthonormal eigendecomposition $\hat{A} = U \Lambda U^\top$ with real eigenvalues $\lambda_1, \dots, \lambda_{n} \in (-1, 1]$ ordered such that $\lambda_1 = 1$. In the eigenbasis, the graph signal $X$ decomposes as $X = U \tilde{X}$ where $\tilde{X} = U^\top X$ collects the spectral coefficients. Then
\begin{equation}
Z_{base} = \hat{A}^K X = U \Lambda^K \tilde{X} = \sum_{i=1}^{n} \lambda_i^K u_i \tilde{X}_i,
\end{equation}
so the $i$-th spectral component is scaled by $\lambda_i^K$. For $|\lambda_i| < 1$, $\lambda_i^K \to 0$ as $K$ grows; for $\lambda_i = 1$ (the eigenspace spanned by $\tilde{D}^{1/2}\mathbf{1}_{\kappa}$ over connected components $\kappa$; self-loops exclude $\lambda=-1$), the component is preserved exactly. In terms of the normalized-Laplacian frequency $\mu = 1-\lambda \in [0,2)$, the response is $(1-\mu)^K$: it equals one at $\mu=0$ and is below one in absolute value elsewhere, so $Z_{base}$ is a low-pass filter in the sense of~\cite{shuman2013emerging}. The response decreases on $\mu\in[0,1]$; for even $K$ it rises again for $\mu>1$, which is why Proposition~1 is stated in terms of $|\lambda|$. This implies that $Z_{base}$ retains the smooth, homophilous content of $X$ while attenuating high-frequency (rapidly varying) components. The empirical plot in the sensitivity analysis in Section~4.6 confirms this: accuracy saturates at small $K$ (around $K=2$ to $3$) because additional smoothing brings diminishing returns once high-frequency noise has been damped.

\subsection{Proof of Proposition~2 (Within-Task Approximation)}
\label{appsec:proof_community}

Let $z_v = Z_{base}^{(v)}$, let $c_m^{tr}$ be the training nodes of community $c_m \in C$, which partition the node set $S$ over which the prototypes are averaged, $w_m = |c_m^{tr}|/|S|$, $P_m = \frac{1}{|c_m^{tr}|}\sum_{v \in c_m^{tr}} z_v$, and $P_0 = \frac{1}{|S|}\sum_{v \in S} z_v = \sum_m w_m P_m$. Write $\bar{x} = x/\|x\|_2$ for $x \neq 0$, let $\bar{0}$ be any unit vector (then both sides of \eqref{eq:weighted_cos} are $0$ when $z_v = 0$, and $P^\top p = 0$ for every $p$ when $P = 0$, so the steps below hold unchanged), and let $a(v)$ be the community containing $v$. For any unit vector $p$,
\begin{equation}
\|z_v\|_2\,\|\bar{z}_v - p\|_2^2 = 2\|z_v\|_2 - 2\,z_v^\top p .
\label{eq:weighted_cos}
\end{equation}
Averaging \eqref{eq:weighted_cos} over $v$ with the global prototype gives
\begin{equation}
\mathbb{E}_v\!\left[\|z_v\|_2\,\|\bar{z}_v - \bar{P}_0\|_2^2\right] = 2\,\mathbb{E}_v\|z_v\|_2 - 2\|P_0\|_2,
\end{equation}
and with each node's own community prototype
\begin{equation}
\mathbb{E}_v\!\left[\|z_v\|_2\,\|\bar{z}_v - \bar{P}_{a(v)}\|_2^2\right] = 2\,\mathbb{E}_v\|z_v\|_2 - 2\textstyle\sum_m w_m\|P_m\|_2 .
\end{equation}
The difference is $\Delta = 2\big(\sum_m w_m\|P_m\|_2 - \|P_0\|_2\big)$, which is non-negative by the triangle inequality $\|\sum_m w_m P_m\|_2 \le \sum_m w_m\|P_m\|_2$, with equality iff all non-zero $P_m$ are positive multiples of one vector. By \eqref{eq:weighted_cos}, $\bar{P}_m$ also minimizes the weighted distance of community $c_m$ over all unit vectors, so no other single direction per community does better. The tightening is largest when community means point in different directions, which is the regime where community detection is most informative. The inequality holds for any partition of $S$; Louvain supplies one from the task's own edges without fixing the number of groups. Our ablation (the ablation study in Section~4.2, \textit{w/o community-aware} row) is consistent with this: removing community-awareness lowers AP on both benchmarks (by 5.26 points on CoraFull and 1.43 points on Kindle).

\noindent\textit{Remark.} This is an in-sample statement about the task's own nodes, not a bound on the margin to the prototypes of other tasks. Inference-time queries lie outside $S$, so for them the gain equals $\Delta$ only insofar as the test nodes of a community resemble its training nodes, which we evaluate empirically in the task-ID accuracy analysis in Section~4.4.

\noindent\textbf{Proof of Corollary~1 (Top-$M$ truncation).} Let $\bar{z}_v$ and $\bar{P}$ denote $\ell_2$-normalized vectors as in the paper, let $P^\star_v$ be the most similar prototype of $v$ in the full set $\mathcal{P}^t_{C}$, and let $Q^\star_v$ attain $r(v)=\min_{Q\in\mathcal{P}^t}\|\bar{P}^\star_v-\bar{Q}\|_2$. Top-$M$ matching returns the retained prototype with the highest cosine score, so its score is at least $\bar{z}_v^\top\bar{Q}^\star_v$. The loss in score is therefore at most $\bar{z}_v^\top(\bar{P}^\star_v-\bar{Q}^\star_v) \le \|\bar{P}^\star_v-\bar{Q}^\star_v\|_2 = r(v)$ by Cauchy--Schwarz. Since $\|\bar{z}_v-\bar{P}\|_2^2 = 2-2\cos(\bar{z}_v,\bar{P})$ for normalized vectors, the squared distance grows by twice the score loss, so $0 \le \epsilon_M \le 2\,\mathbb{E}_{v \in U}[r(v)]$. 

Table~\ref{tab:topm_bound} reports the measured truncation gap $\hat{\epsilon}_M$ and the bound $2\,\mathbb{E}_v[r(v)]$ at the default $M=10$; the bound holds for every test query on all four benchmarks (queries matched against their own task's prototypes, seed 1024; on Products, 212 test queries and 580 prototypes whose SGC embedding is zero are excluded, since they have no direction).
\input{tables/topm_bound}

\subsection{Proof of Proposition~3 (Majority-Vote Consistency)}
\label{appsec:proof_vote}

Let $t^\star$, $p_t$, and $\gamma$ be as in Proposition~3. \textit{Full vote.} Task $t^\star$ receives $p_{t^\star}|V_{test}|$ votes and every other task at most $(p_{t^\star}-\gamma)|V_{test}|$, so the mode is $t^\star$ when $\gamma>0$ and another task when $\gamma<0$. If $p_{t^\star}>1/2$, every other task receives at most a $1-p_{t^\star}$ share of the votes, so $\gamma \ge 2p_{t^\star}-1>0$. \textit{Sampled vote.} Assume $\gamma > 0$ and draw $N$ nodes uniformly at random with replacement. For a competing task $t$, let $B_i = 1$ if the $i$-th drawn node votes for $t^\star$, $B_i=-1$ if it votes for $t$, and $B_i=0$ otherwise. The $B_i$ are independent, take values in $[-1,1]$, and have mean $p_{t^\star}-p_t \ge \gamma > 0$, whatever the dependence between the votes of neighboring nodes in the graph. Task $t$ ties or beats $t^\star$ only if $\frac{1}{N}\sum_i B_i \le 0$, which by Hoeffding's inequality~\cite{hoeffding1963probability} has probability at most $\exp(-2N(p_{t^\star}-p_t)^2/2^2)\le\exp(-N\gamma^2/2)$. A union bound over the $T-1$ competing tasks shows that the sampled vote returns $t^\star$ with probability at least $1-(T-1)\exp(-N\gamma^2/2)$, which tends to $1$ as $N$ grows.

\noindent\textit{Remark on correlated errors.} Node votes are correlated through the graph structure, since neighboring nodes share smoothed features. The randomness above comes only from sampling, so the bound does not assume independent votes; correlation can change the margin $\gamma$ but does not enter the bound. Section~4.4 measures the edge-wise correlation of routing errors (between 0.253 on Reddit and 0.899 on CoraFull) and shows that graph-level routing still assigns all 294 evaluations to the correct task.

\section{Extended Comparison}
\label{appsec:extended_comparison}

This appendix expands the brief comparison in Section~3.4 with a detailed architectural and computational contrast against the two closest replay-free baselines.

\subsection{Comparison with Existing Replay-Free Methods}
Recent parameter-efficient replay-free methods such as Task Profiling and Prompting (TPP)~\cite{niu2024tpp} and Task-Aware Adaptive Modulation (TAAM)~\cite{liu2026taam} reduce the dependence on historical data but introduce new computational and architectural overheads during task adaptation. TPP prepends learnable tokens to node features that must pass through a frozen GNN backbone at every training epoch, so it does not address the underlying cost of recursive message passing. TAAM uses a frozen SGC backbone but introduces substantial architectural complexity through Neural Synapse Modulators (NSMs), which require multi-head attention on every forward pass. TPP's task prototype is itself parameter-free (a mean of Laplacian-smoothed raw features), but its classifier requires contrastive pre-training of the GNN backbone on the first task; TAAM relies on multi-step diffusion. In contrast, TIBA statically precomputes the structural prior, eliminating epoch-level message passing and complex node-level modulators, and projects the structurally infused features into a lightweight bottleneck adapter that achieves efficient task adaptation without inflating the forward pass. \model further obtains task prototypes directly from the precomputed $Z_{base}$ at no additional training cost, with a single shared embedding for both adaptation and routing. This compressed, topology-infused bottleneck is therefore sufficient to achieve a favorable stability--plasticity trade-off in CGL without the computational overhead of existing methods.

\subsection{Novelty against Single-Centroid Prototypes}
The community-aware prototype design (Section~3.2) differs from existing replay-free approaches in two ways. \textbf{(i)} TPP and TAAM~\cite{niu2024tpp,liu2026taam} also compute parameter-free prototypes, but one per task from all training nodes; our prototypes are computed per community on the same $Z_{base}$ that the adapters use, so routing and adaptation share one embedding. \textbf{(ii)} The multi-prototype set provides a granular density estimate of the task's feature space, so queries drawn from diverse neighborhoods can match any of the task's local modes rather than a single averaged centroid (empirically validated in the task-ID accuracy analysis in Section~4.4).

\subsection{Implication of TIBA}
TIBA does not merely replace the traditional GNN backbone; it redefines graph topology as a persistent, non-drifting anchor for sequential adaptation. Because the parameter-free operator that produces $Z_{base}$ is shared across tasks, the plasticity provided by the BFA is always grounded in the original structural manifold: new tasks add only a bounded perturbation around $Z_{base}$ and cannot deform it. This consistent use of $Z_{base}$ as a common denominator keeps stored prototypes compatible with future query embeddings and provides a reliable foundation for both the community-aware prototype extraction of Section~3.2 and the dynamic routing of Section~3.3.

\subsection{Putting the Three Components Together}
The three components of \model interlock at inference time as follows. During training, each task $t$ produces three artifacts that are stored and never overwritten: a bottleneck adapter $\theta^t$, a linear classifier $\phi^t$, and a prototype set $\mathcal{P}^t$ of at most $M$ prototypes. At inference, a test graph traverses these artifacts in a fixed order. First, it is projected into $Z_{test}$ by the same parameter-free SGC operator used during training (Step~1). Next, the per-node embeddings are matched against the entire prototype memory to produce per-node task votes (Step~2), and those votes are aggregated into a single task identity $\hat{t}$ by graph-level majority voting (Step~3). Only once $\hat{t}$ is fixed does the query enter a task-specific, trainable pathway through $\theta^{\hat{t}}$ and $\phi^{\hat{t}}$ (Step~4). This parameter-free-first, task-specific-last ordering is deliberate: every step that consumes trainable parameters is gated by the output of a parameter-free step, so the trainable adapters only ever receive queries that a parameter-free step has already routed to them. The community-aware prototypes thus play a dual role, as a compact memory that replaces replay and as a discrete router that replaces soft attention, while the TIBA adapters focus purely on per-task plasticity.

\subsection{LwF}
LwF~\cite{li2018lwf}, a distillation-based regularization method, performs like the other regularization methods, with AP of 2.90--7.98\% on the four transductive benchmarks, so the paper does not report it.

\subsection{Choice of Community Detector}
Table~\ref{tab:detectors} replaces Louvain, the default of \model, by four other community detection algorithms on CoraFull (Sec.~4.5). AP is above every baseline in Table~2 (SAFER, the strongest, 95.13\%) with each algorithm. It barely changes because the graph-level vote picks the correct task on every test graph, although per-node routing accuracy ranges from 70.76\% (Infomap) to 75.84\% (Fast-Greedy).

\begin{table}[h]
\centering
\caption{\model on CoraFull with five community detection algorithms (three seeds).}
\label{tab:detectors}
\resizebox{0.8\columnwidth}{!}{
\begin{tabular}{lccc}
\toprule
Algorithm & AP (\%) $\uparrow$ & Graph routing (\%) & Node routing (\%) \\
\midrule
Louvain (default) & 95.78±0.24 & 100.00±0.00 & 75.35±0.12 \\
Leiden & 95.78±0.24 & 100.00±0.00 & 75.12±0.09 \\
Infomap & 95.79±0.22 & 100.00±0.00 & 70.76±0.17 \\
Fast-Greedy & 95.78±0.24 & 100.00±0.00 & 75.84±0.00 \\
Label Propagation & 95.78±0.24 & 100.00±0.00 & 72.90±0.62 \\
\bottomrule
\end{tabular}}
\end{table}

\section{Model-Maintenance Measurements}
\label{app:maintenance}

This appendix reports the measurements summarized in Secs.~\ref{sec:eval_memory}, \ref{sec:eval_routing}, and~\ref{sec:eval_cost}. All runs use an NVIDIA A100 80GB PCIe GPU, PyTorch~2.4.1, and CUDA~12.1.

\smallskip\noindent\textbf{CaT with the true task identifier.} For the task-IL row of Table~\ref{tab:equal_memory}, CaT is trained exactly as in the class-IL row (budget 40, seeds 1024--1026), and at test time its logits are masked to the classes of the true task, using the task-IL evaluation of the CGLB code. Its training, storage, and time are therefore those of the class-IL run; only the prediction scope changes.

\smallskip\noindent\textbf{Cost of adding a task.} Table~\ref{tab:pertask} times each task on Arxiv ($T{=}20$, seed 1024, the setting of Table~\ref{tab:main_results_transductive}), excluding the re-evaluation of earlier tasks. Peak GPU memory is reset at the start of each task and read at its end.

\begin{table}[h]
\centering
\caption{Per-task training time and peak GPU memory on Arxiv (A100, seed 1024).}
\label{tab:pertask}
\resizebox{0.99\columnwidth}{!}{
\begin{tabular}{lccccc}
\toprule
\textbf{Method} & \makecell{\textbf{Mean time}\\\textbf{per task (s)}} & \makecell{\textbf{First}\\\textbf{task (s)}} & \makecell{\textbf{Last}\\\textbf{task (s)}} & \makecell{\textbf{Max peak}\\\textbf{memory (MB)}} & \makecell{\textbf{Last-task peak}\\\textbf{memory (MB)}} \\
\midrule
CaT & 14.93 & 17.15 & 14.58 & 265.87 & 92.77 \\
TPP & 3.37 & 17.62 & 2.47 & 235.57 & 89.87 \\
\model & \textbf{3.15} & \textbf{2.08} & 2.74 & \textbf{170.39} & \textbf{81.34} \\
\bottomrule
\end{tabular}}
\end{table}

\smallskip\noindent\textbf{A long task stream.} We split CoraFull into 70 tasks of one class each and run \model in its default configuration (Appendix~A.2) with seed 1024. Table~\ref{tab:longstream} reports the stored model after $T$ tasks and the routing of one test graph per learned task. Storage grows linearly with $T$, mostly in the adapters, and the routing latency grows with the number of stored prototypes, as Sec.~\ref{subsec:complexity} predicts. At $T{=}70$, routing selects the correct task for 68 of the 70 test graphs. With one class per task, accuracy within the routed task is trivially 100\%, so the table does not report it.

\begin{table}[h]
\centering
\caption{Storage and routing on a 70-task CoraFull stream (one class per task, default configuration, seed 1024).}
\label{tab:longstream}
\resizebox{0.99\columnwidth}{!}{
\begin{tabular}{rccccccc}
\toprule
$T$ & \makecell{\textbf{Adapters}\\\textbf{(MB)}} & \makecell{\textbf{Heads}\\\textbf{(MB)}} & \makecell{\textbf{Prototypes}\\\textbf{(MB)}} & \makecell{\textbf{Total}\\\textbf{(MB)}} & \makecell{\textbf{Routing}\\\textbf{(ms/graph)}} & \makecell{\textbf{Routing}\\\textbf{accuracy (\%)}} \\
\midrule
10 & 89.54 & 0.35 & 3.34 & 93.24 & 3.42 & 100.00 \\
35 & 313.40 & 1.22 & 12.05 & 326.68 & 8.19 & 100.00 \\
70 & 626.81 & 2.44 & 24.07 & 653.32 & 15.18 & 97.14 \\
\bottomrule
\end{tabular}}
\end{table}

\smallskip\noindent\textbf{How many nodes the vote needs.} Graph-level routing votes over all test nodes of a test graph (Sec.~\ref{sec:eval_routing}). Table~\ref{tab:votesize} instead samples $N$ test nodes per test graph with replacement (three draws) and repeats only the vote, with the trained models of seeds 1024--1026. A single node routes correctly in 52.78--98.89\% of the test graphs; ten nodes raise this to 86.48--100\%, and 100 nodes to 98.07--100\%.

\begin{table}[h]
\centering
\caption{Graph-level routing accuracy (\%) with $N$ sampled test nodes per test graph (mean ± std over seeds 1024--1026).}
\label{tab:votesize}
\resizebox{0.85\columnwidth}{!}{
\begin{tabular}{lcccc}
\toprule
\textbf{Dataset} & $N{=}1$ & $N{=}10$ & $N{=}100$ & \textbf{All nodes} \\
\midrule
CoraFull & 72.70±4.50 & 97.78±0.55 & 100.00±0.00 & 100.00±0.00 \\
Arxiv & 52.78±6.73 & 93.89±3.47 & 100.00±0.00 & 100.00±0.00 \\
Reddit & 98.89±0.96 & 100.00±0.00 & 100.00±0.00 & 100.00±0.00 \\
Products & 57.97±6.32 & 86.48±2.21 & 98.07±1.67 & 100.00±0.00 \\
\bottomrule
\end{tabular}}
\end{table}

\end{document}
\endinput

%% file: section/ch1_introduction.tex
\section{Introduction}


\noindent\textbf{Background.} Graph representation learning (GRL) powers systems that store and serve graph data, such as recommendation~\cite{wang2019neural} and social-network~\cite{freeman2000visualizing} platforms. These graphs are rarely static: their structure and node labels evolve over time, so the learned models must be maintained accordingly. To handle such streams, \textit{continual graph learning} (CGL)~\cite{Febrinanto2023lifelong,zhang2023cglb} learns from a sequence of graph tasks while mitigating \textit{catastrophic forgetting}, the loss of past-task performance when past data is unavailable later.

CGL problems are typically framed as task-incremental (task-IL) or class-incremental (class-IL) learning. In task-IL, the task identifier is provided at inference time, so the model discriminates only among the classes of that task. In class-IL, task identifiers are withheld at inference, forcing the model to discriminate among all classes seen so far; this scenario is regarded as more realistic and significantly more challenging than task-IL~\cite{zhang2023cglb}. Within class-IL, two settings are distinguished by inter-task edges: a \textit{transductive} setting without them, and an \textit{inductive} setting with them.

Existing CGL approaches fall into three families~\cite{liu2023cat,han2024taco,zhang2024continual}. (i) \textit{Regularization}-based methods~\cite{liu2021twp,su2023ssrm} penalize updates to important parameters, but the penalty limits plasticity~\cite{pan2020continual,tang2021graphbased,han2024taco}. (ii) \textit{Memory-replay}-based methods~\cite{zhou2021ergnn,zhang2022ssm,liu2023cat,han2024taco} replay past samples, but they must retain past graph data, raw or condensed, which raises privacy and data-retention concerns~\cite{shaowang2025dataminimization} and caps scalability under strict memory budgets. (iii) \textit{Replay-free} approaches bypass historical data. Early parameter-isolation methods~\cite{Zhang2023hpns,zhang2023pignn} assign disjoint parameter subsets per task; among recent frameworks, TPP~\cite{niu2024tpp} preserves task knowledge via prototypes and prompts, while TAAM~\cite{liu2026taam} does so via neural synapse modulators. Both TPP and TAAM remain bottlenecked, however. TPP's graph neural network (GNN) backbone is pre-trained on the first task and then frozen, preventing it from adapting to later tasks (Sec.~\ref{subsec:complexity}). TAAM freezes its per-task modules, yet nodes of later classes still compete with those of earlier tasks. On Arxiv and Products, it loses accuracy on earlier tasks, with backward transfer (BWT) of $-3.41$ and $-5.67$, and reaches only 82.96\% and 86.94\% average performance (AP), 9.78--12.28 points below our method (Sec.~\ref{main-indivisual}). Our method instead bypasses recursive message passing: it precomputes a parameter-free structural prior and routes it through a lightweight bottleneck adapter, yielding a favorable accuracy--efficiency trade-off.

Empirically, these bottlenecks manifest as setting-specific bias: each prior method is designed and evaluated primarily for a single setting, and none of the compared methods evaluates streams that interleave transductive and inductive tasks in its original paper. In transductive streams, TACO~\cite{han2024taco} runs out of memory on Products and averages only $22.27\%$ AP on the other three benchmarks (Sec.~\ref{main-indivisual}). In inductive streams, the ranking reverses: TPP~\cite{niu2024tpp}, which averages $95.63\%$ AP in the transductive setting, trails TACO ($75.14\%$ for TPP vs.\ $77.77\%$ for TACO, AP averaged over three benchmarks). TPP is also $1.8$--$3.7\times$ slower than our method there (Sec.~\ref{main-indivisual}). This inconsistency motivates \pd{a \textit{unified} evaluation protocol over} transductive, inductive, and mixed task streams.





\noindent\textbf{Our contributions.} We propose \textbf{Co}mmunity-aware \textbf{P}rototype \textbf{E}xtraction for \textbf{C}ontinual \textbf{G}raph \textbf{L}earning (simply, \model), a replay-free CGL framework built on a single parameter-free structural prior that jointly supports task learning and task-free inference. Fig.~\ref{fig:framework} illustrates \model in the class-IL scenario. During training, each task graph is projected via Laplacian smoothing into a base embedding $Z_{base}$, which is used to train a lightweight task-specific adapter and to extract community-aware prototypes $\mathcal{P}^t$. At inference, the same projection yields $Z_{test}$, which is matched against the prototype memory $\mathcal{P}=\{\mathcal{P}^1,\dots,\mathcal{P}^T\}$ of all observed tasks by a majority vote of its nodes (or node by node when a test graph mixes tasks). $Z_{test}$ is then routed to the corresponding adapter and classifier (details in Sec.~\ref{sec:methodology}).

\begin{figure*}[th]
  \centering
  \includegraphics[width=0.80\textwidth]{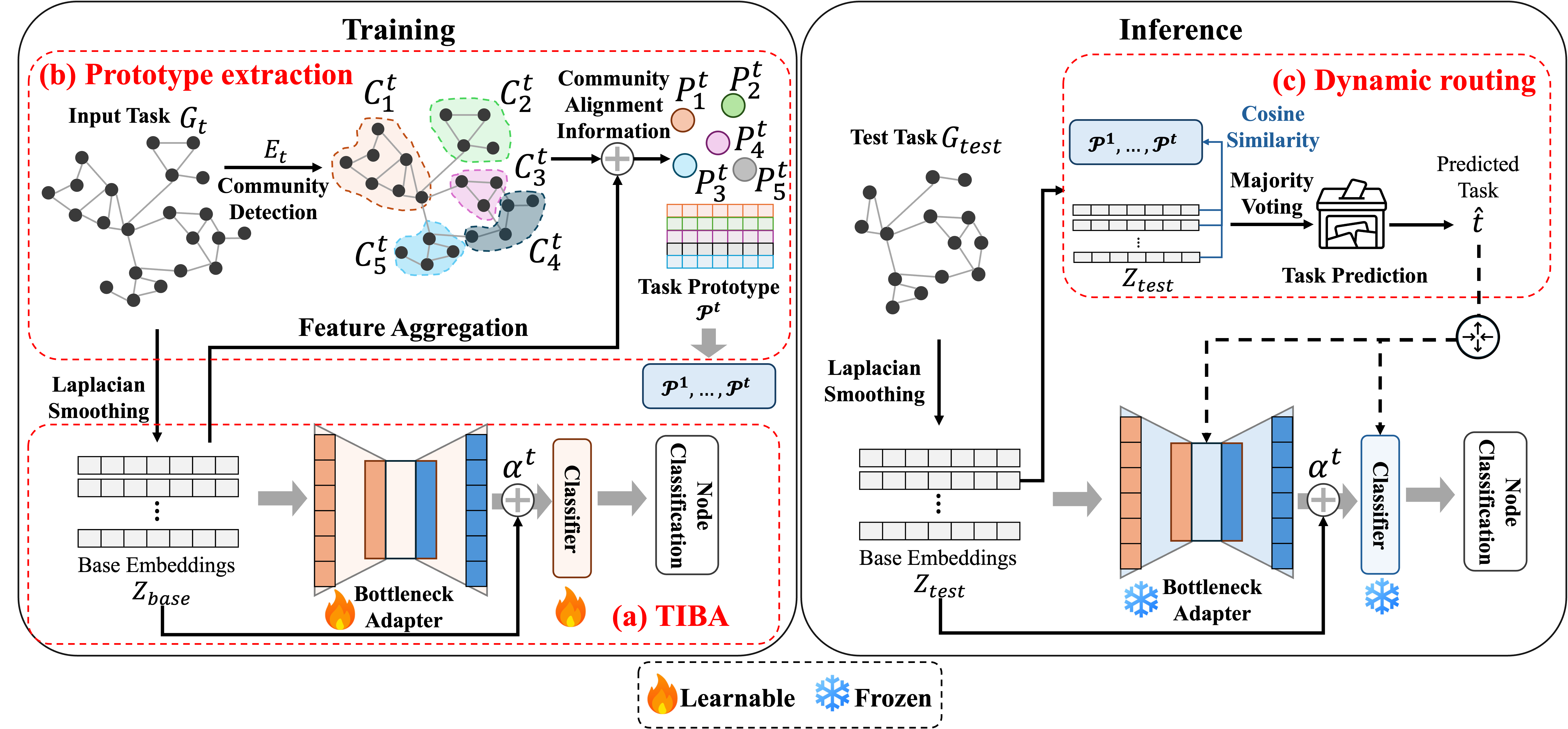}
  \caption{Overview of \model. During training, $K$-hop Laplacian smoothing maps each task graph to $Z_{base}$, on which a task-specific bottleneck adapter is trained and from which community-aware prototypes $\mathcal{P}^t$ are extracted into memory. During inference, $Z_{test}$ is routed to the matching adapter and classifier via graph-level majority voting over the prototype memory, or node by node in a node-level variant.}
  \label{fig:framework}
\end{figure*}

Our main contributions are summarized as follows:
\begin{itemize}
\item \textbf{Community-aware prototypes as a replay-free memory.} To our knowledge, \model is the first replay-free CGL method to use graph communities as the unit of task memory, distilling each task into at most $M$ non-learned community-level prototypes with a constant memory cost per task.
\item \textbf{Fast model maintenance at equal storage.} \model runs end to end $1.3$--$7.9\times$ faster than the most accurate competing baseline on each of the seven benchmarks and $1.1$--$14.0\times$ faster in each of the 35 mixed configurations (seven benchmarks, five task ratios; Sec.~\ref{subsec:complexity}, Secs.~\ref{main-indivisual}--\ref{sec:mixed_res}). It also scales to the largest benchmark (Sec.~\ref{sec:eval_cost}). At equal storage and without task identifiers, it matches the AP that the strongest replay baseline, CaT, reaches only with the true task identifier (within 0.89 percentage points) while running $5.9$--$14.0\times$ faster (Sec.~\ref{sec:eval_memory}).
\item \textbf{A single structural prior for learning, memory, and inference.} All three components of \model, namely Topology-Infused Bottleneck Adaptation (TIBA), community-aware prototype extraction, and graph-level dynamic routing, are organized around one parameter-free structural prior $Z_{base}$, enabling task-free inference without a GNN backbone.
\item \textbf{Unified evaluation across transductive, inductive, and mixed settings.} Against up to eleven baselines on seven benchmarks and interleaved mixed streams, \model outperforms the second-best model by up to $1.41$, $6.32$, and $9.33$ percentage points of AP in the three settings, respectively; it trails TACO or TPP only on DBLP, by at most $2.50$ percentage points.
\item \textbf{Routing beyond single-task test graphs, and robustness.} When one test graph mixes nodes of several tasks, a node-level variant of the routing still gives the best AP among CGL methods (Sec.~\ref{sec:eval_routing}). Under random edge noise up to 50\%, \model has the highest AP in all 16 noisy settings and leads every replay-free baseline at 50\% noise by 1.66--3.57~percentage points of AP (Sec.~\ref{sec:eval_robust}).
\end{itemize}

%% file: section/ch2_problem_definition.tex
\section{Problem Statement}
\label{sec:problem}

A graph $G=(V,E)$ has node set $V$, edge set $E$, and adjacency matrix $A \in \{0,1\}^{|V| \times |V|}$ with $A_{i,j}=1$ iff $(v_i,v_j) \in E$. Each node $v_i$ has a class label $y_i \in \mathcal{C}$, where $\mathcal{C}$ is the set of all classes in the stream, and $X \in \mathbb{R}^{|V| \times d}$ holds the $d$-dimensional node attributes.

We study class-IL: after training on tasks $\mathcal{T}_1,\ldots,\mathcal{T}_t$, the model receives no task identifier at test time and must label each test node from all classes seen so far, while mitigating catastrophic forgetting.

\noindent\textbf{Task regimes.} A CGL model processes a stream of tasks $\mathcal{T} = \{\mathcal{T}_1, \dots, \mathcal{T}_T\}$; at task $\mathcal{T}_t$, it observes a subgraph $G_t = (V_t, E_t)$ with label space $\mathcal{C}_t \subseteq \mathcal{C}$. We denote $V_{<t} = \bigcup_{i<t} V_i$ and $\mathcal{C}_{<t} = \bigcup_{i<t} \mathcal{C}_i$. Following CGLB~\cite{zhang2023cglb} and TACO~\cite{han2024taco}, we consider two regimes and add a third. In the \textit{(i) transductive} regime (traditional class-IL), $V_t \cap V_{<t} = \emptyset$ and inter-task edges are severed. In the \textit{(ii) inductive} regime (generalized class-IL), for $t>1$, earlier nodes may reappear ($V_t \cap V_{<t} \neq \emptyset$), label spaces may overlap ($\mathcal{C}_t \cap \mathcal{C}_{<t} \neq \emptyset$), and inter-task edges are preserved. The \textit{(iii) mixed} regime randomly interleaves the two and better reflects deployment (construction in Sec.~\ref{sec:mixed_construction}; scheduler in Appendix~A.1).

\noindent\textbf{Task-free inference.} At test time, the model receives a test graph $G_{test}=(\tilde{V}_{test},E_{test})$ with features $X_{test}$ and test nodes $V_{test} \subseteq \tilde{V}_{test}$; the stream specifies which nodes and edges form $G_{test}$ but withholds its task identity. Following TPP~\cite{niu2024tpp}, the default protocol assumes one source task $\mathcal{T}_{t^\star}$ with $t^\star \le t$ and $V_{test} \subseteq V_{t^\star}$ ($t$ is written $T$ in Sec.~\ref{sec:methodology}). The test graph $G_{test}$ is $G_{t^\star}$ with the features and edges of all its nodes (training, validation, and test) but no labels; only the nodes of $V_{test}$ are scored and, for methods that infer the task, cast task votes. Sec.~\ref{sec:eval_routing} also evaluates test graphs whose nodes span several tasks, where the task is inferred per node. Under the default protocol, a method that infers the task of the whole test graph predicts among that task's classes, whereas a method with one shared classifier chooses among all classes seen so far; our comparisons with shared-classifier methods (Secs.~\ref{main-indivisual} and~\ref{sec:eval_memory}) are therefore stated under this protocol.

\noindent\textbf{Replay-free setting.} After learning task $\mathcal{T}_{t-1}$, the model may not store any raw nodes, attributes, or labels of earlier tasks and retains knowledge only through its parameters or compressed, non-raw statistics; this restriction makes both intra-task classification (plasticity) and inter-task identification hard. Earlier nodes that reappear in an inductive $G_t$, along with their features and edges, come from the current stream rather than memory, and we use only current training labels; after each task, \model discards the raw graph. Inductive tasks of Kindle, DBLP, and ACM are split by node timestamp~\cite{han2024taco}, and in mixed streams the reappearing nodes of an inductive task are drawn only from earlier tasks (Appendix~A.1), so nothing from the future is visible.

%% file: section/ch3_proposed_method.tex
\section{Methodology}
\label{sec:methodology}

This section presents the three components of \model: (i) TIBA trains a lightweight task-specific adapter on a precomputed structural prior; (ii) community-aware prototype extraction distills each task into community-level prototypes; and (iii) graph-level dynamic routing uses these prototypes to route each unlabeled test graph to the matching adapter and classifier. Appendix~B gives the full procedure as pseudocode.

\subsection{Topology-Infused Bottleneck Adaptation}
\label{sec:sgc_extraction}

\noindent\textbf{Motivation.} Traditional GNNs couple feature transformation with neighborhood aggregation, forcing expensive message passing at every epoch; this repeated propagation is prohibitive in CGL's task-stream setting and a source of structural distortion across tasks. \model therefore decouples propagation from optimization by following Simplified Graph Convolution (SGC)~\cite{wu2019sgc}: it makes multi-hop topology a parameter-free prior and trains a lightweight task-specific adapter on top of it.

\noindent\textbf{TIBA at a glance.} TIBA (Fig.~\ref{fig:framework}~(a)) passes a structural base embedding $Z_{base}$ through our Bottleneck Feature Adapter (BFA) and adds the result, scaled by a fixed weight $\alpha$, back to $Z_{base}$ to yield the task-specific representation $Z_{adapted}$.

\noindent\textbf{Structural prior.} Given a task $G_t = (V_t, E_t)$ with node features $X \in \mathbb{R}^{|V_t| \times d}$, we build the symmetric normalized adjacency matrix with self-loops,
\begin{equation}
\hat{A} = \tilde{D}^{-1/2}\tilde{A}\tilde{D}^{-1/2},
\end{equation}
where $\tilde{A} = A + I$ adds self-loops so each node attends to its own features, and $\tilde{D}$ is the diagonal degree matrix of $\tilde{A}$. This symmetric normalization prevents high-degree hubs from dominating aggregation and stabilizes the spectral range of $\hat{A}$. Rather than propagating messages inside the training loop, we precompute the $K$-hop aggregation once via $K$ sparse matrix--matrix multiplications,
\begin{equation}Z_{base} = \hat{A}^K X,\end{equation}
a fixed Laplacian smoothing~\cite{chung1997spectral} that attenuates every spectral component of the graph signal $X$ except the smoothest one~\cite{shuman2013emerging}:
\begin{proposition}\label{prop:laplacian}
Let $\hat{A}$ have eigenvalues $\lambda_1, \dots, \lambda_{|V_t|} \in (-1, 1]$. Then $Z_{base} = \hat{A}^K X$ attenuates each eigencomponent of $X$ by $\lambda^K$: components with $|\lambda| < 1$ decay geometrically, while the $\lambda = 1$ eigenspace (spanned by $\tilde{D}^{1/2}\mathbf{1}_{\kappa}$ per connected component $\kappa$) is preserved exactly.
\end{proposition}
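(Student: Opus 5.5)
The plan is to work entirely in the eigenbasis of $\hat{A}$. First I establish the spectral facts the statement presupposes. Since $\tilde{A}$ is symmetric, so is $\hat{A}=\tilde{D}^{-1/2}\tilde{A}\tilde{D}^{-1/2}$, and the spectral theorem gives an orthonormal decomposition $\hat{A}=U\Lambda U^\top$ with real eigenvalues. To locate these eigenvalues in $(-1,1]$, I use the similarity $\hat{A}=\tilde{D}^{1/2}(\tilde{D}^{-1}\tilde{A})\tilde{D}^{-1/2}$. The matrix $\tilde{D}^{-1}\tilde{A}$ is row-stochastic, so its spectral radius is at most $1$, which gives $|\lambda_i|\le 1$.

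For the value $1$, I check directly that $\hat{A}\,\tilde{D}^{1/2}\mathbf{1}_\kappa=\tilde{D}^{-1/2}\tilde{A}\mathbf{1}_\kappa=\tilde{D}^{-1/2}\tilde{D}\mathbf{1}_\kappa=\tilde{D}^{1/2}\mathbf{1}_\kappa$. This holds because $\tilde{A}\mathbf{1}_\kappa$ equals the degree vector restricted to component $\kappa$, since no edges leave $\kappa$.

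The exclusion of $-1$ and the fact that the $\lambda=1$ eigenspace is exactly spanned by these vectors both come from the quadratic forms
\begin{equation*}
f^\top(I-\hat{A})f=\tfrac12\sum_{i,j}\tilde{A}_{ij}\big(g_i-g_j\big)^2,\qquad f^\top(I+\hat{A})f=\tfrac12\sum_{i,j}\tilde{A}_{ij}\big(g_i+g_j\big)^2,
\end{equation*}
where $g=\tilde{D}^{-1/2}f$. If $\hat{A}f=f$, the first form vanishes, so $g$ is constant on each connected component. If $\hat{A}f=-f$, the second form vanishes. The diagonal self-loop terms $\tilde{A}_{ii}=1$ then force $4g_i^2=0$ for every $i$, so $f=0$. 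Hence $-1$ is not an eigenvalue.

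With the spectrum in hand, the filtering claim is immediate. Writing $\tilde{X}=U^\top X$, we get $\hat{A}^K X=U\Lambda^K U^\top X=\sum_i \lambda_i^K u_i\tilde{X}_i$. The coefficient of the $i$-th eigencomponent is therefore multiplied by $\lambda_i^K$. For $|\lambda_i|<1$ it satisfies $|\lambda_i^K|=|\lambda_i|^K$, which is geometric decay in $K$. For $\lambda_i=1$ the factor is exactly $1$, so the projection of $X$ onto $\mathrm{span}\{\tilde{D}^{1/2}\mathbf{1}_\kappa\}$ is unchanged. An optional last step restates the result in Laplacian frequencies $\mu=1-\lambda$, giving the response $(1-\mu)^K$.

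The main obstacle is the quadratic-form step. It requires keeping the $\tilde{D}^{-1/2}$ substitution straight and making sure the self-loops are actually what rules out $\lambda=-1$; without self-loops, bipartite components would give $-1$. Everything else is routine linear algebra.
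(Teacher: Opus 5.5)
Your proof is correct and follows the same route as the paper's: an orthonormal eigendecomposition of the symmetric $\hat{A}$, eigenvalue bounds via its similarity to the row-stochastic $\tilde{D}^{-1}\tilde{A}$, and componentwise scaling of $U^\top X$ by $\lambda_i^K$. The paper only asserts the characterization of the $\lambda=1$ eigenspace and the exclusion of $\lambda=-1$ by self-loops; your quadratic-form argument with $g=\tilde{D}^{-1/2}f$ proves both correctly.
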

\noindent The eigendecomposition $\hat{A}=U\Lambda U^\top$ gives $Z_{base}=U\Lambda^K U^\top X$ (full proof in Appendix~C.1); moderate $K$ suffices in practice, with AP flat for $K\ge2$ on Products (Fig.~\ref{fig:params1_1}). A large $K$ is also undesirable: as the components with $|\lambda|<1$ vanish, the rows of $Z_{base}$ on a connected task graph converge to one direction, and so do the community prototypes averaged from them (Sec.~\ref{sec:prototype}). Since $Z_{base}$ is computed once per task and reused in the prototype extraction (Sec.~\ref{sec:prototype}), and the same operator produces the test embeddings for dynamic routing (Sec.~\ref{sec:tf_infer}), it ties the three components together.

\noindent\textbf{Bottleneck adapter.} Using $Z_{base}$ directly lacks task-specific plasticity, so the BFA adds a residual correction:
\begin{equation}
Z_{adapted} = Z_{base} + \alpha \cdot \text{Adapter}_{\theta^t}(Z_{base}),
\end{equation}
where the hyperparameter $\alpha$ (default $1$) scales the adapter output ($\alpha=0$ removes the adapter), and $\text{Adapter}_{\theta^t}(Z) = \sigma(Z W_{down} + \mathbf{1}\beta_{down}^\top)\, W_{up} + \mathbf{1}\beta_{up}^\top$ with $W_{down} \in \mathbb{R}^{d \times b}$, ReLU activation $\sigma$, $W_{up} \in \mathbb{R}^{b \times d}$, and biases $\beta_{down}\in\mathbb{R}^b$, $\beta_{up}\in\mathbb{R}^d$ ($2db+b+d$ parameters). The bottleneck of width $b \le d/4$ (Appendix~A.2) sharply reduces trainable parameters per task, and because $W_{down}, W_{up}$ act on the already-diffused $Z_{base}$, a single forward pass through the BFA suffices per training epoch.

\noindent\textbf{Zero initialization.} To start each new task from the prior itself, we zero-initialize $W_{up}$ and $\beta_{up}$, so the residual block starts as an identity mapping ($Z_{adapted} = Z_{base}$) and training departs from the structural prior only as far as the current task requires. \model then trains only the adapter weights $\theta^t$ and a linear head $\phi^t$, achieving rapid task specialization through the compressed bottleneck.



\subsection{Community-Aware Prototype Extraction}
\label{sec:prototype}

\noindent\textbf{Motivation.} Having established $Z_{base}$ and the BFA, we now summarize each finished task into a compact per-task memory for inference-time routing. Prior methods condense a task into either a single global prototype, as in replay-free methods~\cite{niu2024tpp,liu2026taam}, or randomly sampled exemplars stored for replay~\cite{zhou2021ergnn, liu2023cat}. Graph nodes, however, are homophilic within local neighborhoods but can vary sharply across structurally distinct regions of the same graph. A single global prototype, therefore, collapses distant communities onto one point, and a query can no longer tell which region of the task it resembles. To address this, \model replaces the global prototype with a set of \textit{community-aware prototypes}: it partitions $G_t$ into communities and summarizes each with its own centroid over $Z_{base}$.

\noindent\textbf{Community partitioning.} We identify cohesive subgraphs (Fig.~\ref{fig:framework}~(b)) using Louvain~\cite{blondel2008fast} as the default; Leiden~\cite{traag2019from} and others are supported, and \model is robust to this choice (Sec.~\ref{diverse-community-detection}). The partition uses only the edge set $E_t$, without additional learning, yielding disjoint communities $C=\{c_1, \dots, c_{|C|}\}$ that group structurally proximal nodes.

\noindent\textbf{Prototype aggregation.} For each community $c_m \in C$ with training nodes $c_m^{tr}$ (communities without training nodes are dropped), the community-aware prototype $P_m^t \in \mathbb{R}^d$ averages the base embeddings of the nodes in $c_m^{tr}$:
\begin{equation}
    P_m^t = \frac{1}{|c_m^{tr}|} \sum_{v \in c_m^{tr}} Z_{base}^{(v)},
\end{equation}
where $Z_{base}^{(v)}$ is the row of node $v$ in the base embedding from Sec.~\ref{sec:sgc_extraction}, computed on all nodes of $G_t$ without labels. \model stores the prototypes of up to $M$ communities with the most training nodes, $\tilde{C} \subseteq C$, as $\mathcal{P}^t = \{P_m^t : c_m \in \tilde{C}\}$; $\mathcal{P}^t_C = \{P_m^t : c_m \in C\} \supseteq \mathcal{P}^t$ denotes the means of all communities and is used only in the analysis below. Larger communities yield centroids that are less sensitive to individual outliers, and the cap keeps per-task storage at $Md$ values; Corollary~\ref{cor:topm} below bounds the matching error that this cap introduces. Because each $P_m^t$ is a fixed mean, \pd{not a learned vector}, it is computed in one pass after training without gradient updates, while preserving the local neighborhood semantics of its community.

\noindent\textbf{Novelty against single-centroid prototypes.} Storing only $\mathcal{P}^t$ per task ($\mathcal{O}(Md)$ memory, independent of task-graph size) bypasses raw-data replay buffers while remaining more expressive than a single global prototype (see the drop of the \textit{w/o community-aware} variant in Sec.~\ref{sec:ablation}). A detailed contrast with the single task-wide prototypes of TPP and TAAM~\cite{niu2024tpp,liu2026taam}, including the density-estimate argument examined in Sec.~\ref{sec:eval_routing}, is deferred to Appendix~D.2.

\noindent\textbf{Within-task approximation.} Routing compares $\ell_2$-normalized vectors (Sec.~\ref{sec:tf_infer}), so we state the bound in that space. Let $S$ be the node set over which the prototypes are averaged (the training nodes of $G_t$, which the sets $c_m^{tr}$ partition), $w_m = |c_m^{tr}|/|S|$ the share of community $c_m$, and $P_0^t = \frac{1}{|S|}\sum_{v \in S} Z_{base}^{(v)}$ the single global prototype. We write $\bar{\cdot}$ for the $\ell_2$-normalization ($\bar{0}$ is any unit vector), so that $\bar{z}_v$ is the normalized $Z_{base}^{(v)}$. We measure the distance of node $v$ to a prototype $P$ by the norm-weighted squared distance $\|Z_{base}^{(v)}\|_2\,\|\bar{z}_v-\bar{P}\|_2^2$, averaged over $v \in S$. Within each community, the normalized mean $\bar{P}_m^t$ minimizes this distance over all unit vectors, i.e., it is the norm-weighted mean direction of the community, so the measure matches the prototypes \model actually stores.
\begin{proposition}\label{prop:community}
For the community partition $C$, matching each node $v \in S$ to the prototype of its own community $c_{a(v)}$ instead of the single global prototype $P_0^t$ reduces the average norm-weighted squared distance by $\Delta = 2\big(\sum_{c_m \in C} w_m\|P_m^t\|_2 - \|P_0^t\|_2\big) \ge 0$, with equality iff all non-zero community means point in the same direction.
\end{proposition}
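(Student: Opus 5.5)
The plan is to reduce the norm-weighted squared distance to a linear expression in the prototype direction, average it over $S$, and compare the two averages directly. The starting point is a pointwise identity. For any node $v$ with $z_v = Z_{base}^{(v)}$ and any unit vector $p$, expanding the square and using $\|\bar{z}_v\|_2=\|p\|_2=1$ gives
\[
\|z_v\|_2\,\|\bar{z}_v-p\|_2^2 = \|z_v\|_2\,(2-2\,\bar{z}_v^\top p) = 2\|z_v\|_2 - 2\,z_v^\top p .
\]
When $z_v=0$ both sides vanish, whatever unit vector $\bar{0}$ denotes. This identity removes the normalization inside the distance: the cost is linear in $z_v$, and only the direction $p$ is being chosen.

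\textbf{Step 1 (global prototype).} Average the identity over $v\in S$ with $p=\bar{P}_0^t$. Since $\frac{1}{|S|}\sum_{v\in S} z_v = P_0^t$ and $(P_0^t)^\top\bar{P}_0^t=\|P_0^t\|_2$, the average cost is
\[
2\,\mathbb{E}_v\|z_v\|_2 - 2\|P_0^t\|_2 .
\]
If $P_0^t=0$, the inner product is $0$ for any choice of $\bar{0}$, so the same formula holds.

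\textbf{Step 2 (community prototypes).} Split the sum over $S$ along the partition $\{c_m^{tr}\}$ and use $p=\bar{P}_m^t$ on community $c_m$. The sum of $z_v$ over $c_m^{tr}$ equals $|c_m^{tr}|P_m^t$, and $(P_m^t)^\top\bar{P}_m^t=\|P_m^t\|_2$. The average cost is therefore
\[
2\,\mathbb{E}_v\|z_v\|_2 - 2\sum_m w_m\|P_m^t\|_2 .
\]
Subtracting the two averages gives exactly $\Delta = 2\big(\sum_m w_m\|P_m^t\|_2 - \|P_0^t\|_2\big)$.

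\textbf{Step 3 (sign and equality).} Write $P_0^t=\sum_m w_m P_m^t$ with $w_m\ge 0$ and $\sum_m w_m=1$. The triangle inequality gives $\|P_0^t\|_2\le\sum_m w_m\|P_m^t\|_2$, so $\Delta\ge 0$. Every community kept in $C$ has training nodes, so all $w_m>0$. Equality in the triangle inequality for a sum of vectors holds iff the non-zero summands are non-negative multiples of a common vector. Here that means all non-zero $P_m^t$ point in the same direction. Zero means contribute nothing to either side, so they do not affect the condition.

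\textbf{Supporting remark.} By the identity, on each community the cost $2\sum_{v\in c_m^{tr}}\|z_v\|_2 - 2|c_m^{tr}|(P_m^t)^\top p$ is minimized over unit $p$ by $p=\bar{P}_m^t$, via Cauchy--Schwarz. This justifies calling $\bar{P}_m^t$ the norm-weighted mean direction, as the text claims.

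The main obstacle is not the inequality itself, which is a one-line triangle inequality. It is the bookkeeping of degenerate cases: zero embeddings, zero community means, and the arbitrary choice of $\bar{0}$. The identity and the inner-product step must be checked to hold unchanged in each of these cases. The equality condition must also be stated correctly, as "non-zero means are positive multiples," rather than as "all means are equal."
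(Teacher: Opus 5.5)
Your proposal is correct and follows the paper's proof essentially step for step. It uses the same identity $\|z_v\|_2\,\|\bar{z}_v-p\|_2^2 = 2\|z_v\|_2 - 2z_v^\top p$, the same two averages against $\bar{P}_0^t$ and against each node's own $\bar{P}_{a(v)}^t$, and the same triangle inequality with its equality case (non-zero means are positive multiples of one vector), and it also matches the paper's treatment of zero vectors and its remark that $\bar{P}_m^t$ is the optimal unit direction for each community.
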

\noindent The proof combines $\|z\|_2\|\bar{z}-p\|_2^2=2\|z\|_2-2z^\top p$ for unit $p$ with the triangle inequality $\|\sum_m w_m P_m^t\|_2\le\sum_m w_m\|P_m^t\|_2$ (Appendix~C.2). The gap $\Delta$ grows as the community means spread apart in direction. $\Delta$ measures how \pd{closely} the community means $\mathcal{P}^t_C$ approximate the training nodes $S$; it does not by itself enlarge the margin to the prototypes of other tasks, which routing accuracy (Sec.~\ref{sec:eval_routing}) and the \textit{w/o community-aware} variant (Sec.~\ref{sec:ablation}) measure.

\noindent\textit{Remark (from Proposition~\ref{prop:community} to the routing rule).} Let $\bar{z}_v$ and $\bar{P}_m$ denote the $\ell_2$-normalized $Z_{base}^{(v)}$ and $P_m^t$. Three facts connect the proposition to the matching rule of Sec.~\ref{sec:tf_infer}. (i) For normalized vectors, $\|\bar{z}_v-\bar{P}_m\|_2^2 = 2-2\cos(\bar{z}_v,\bar{P}_m)$, so maximizing cosine similarity is equivalent to minimizing squared Euclidean distance. (ii) Since $\mathcal{P}^t_C$ contains each node's own community mean, $\min_{P \in \mathcal{P}^t_C}\|\bar{z}_v-\bar{P}\|_2^2 \le \|\bar{z}_v-\bar{P}_{a(v)}\|_2^2$, so, because the weights $\|Z_{base}^{(v)}\|_2$ are non-negative, nearest-prototype matching over $\mathcal{P}^t_C$ reduces the distance by at least $\Delta$. (iii) Storing only $\mathcal{P}^t$ raises each node's best squared distance by a non-negative amount, whose average over a node set $U$ is \begin{equation} \epsilon_M = \mathbb{E}_{v \in U}\Big[\min_{P \in \mathcal{P}^t}\|\bar{z}_v-\bar{P}\|_2^2 - \min_{P \in \mathcal{P}^t_{C}}\|\bar{z}_v-\bar{P}\|_2^2\Big] \ge 0; \end{equation} $\epsilon_M$ vanishes when every community is retained. The following corollary bounds this term by the distance from each node's best prototype in the full set to the retained set. \begin{corollary}[Top-$M$ truncation]\label{cor:topm} For a node $v$, let $P^\star_v$ be its most similar prototype in $\mathcal{P}^t_{C}$, and let $r(v)=\min_{Q\in\mathcal{P}^t}\|\bar{P}^\star_v-\bar{Q}\|_2$ be the distance from $\bar{P}^\star_v$ to the nearest retained prototype. Truncation increases the squared distance of $v$ to its best match by at most $2r(v)$, so $0 \le \epsilon_M \le 2\,\mathbb{E}_{v \in U}[r(v)]$ for any node set $U$. \end{corollary}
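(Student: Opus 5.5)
The plan is to work entirely in cosine scores rather than distances, using fact (i) of the preceding remark. For unit vectors, $\|\bar{z}_v-\bar{P}\|_2^2 = 2-2\,\bar{z}_v^\top\bar{P}$. The per-node truncation gap is
\begin{equation}
\min_{P\in\mathcal{P}^t}\|\bar{z}_v-\bar{P}\|_2^2-\min_{P\in\mathcal{P}^t_C}\|\bar{z}_v-\bar{P}\|_2^2,
\end{equation}
and it equals $2\big(\max_{P\in\mathcal{P}^t_C}\bar{z}_v^\top\bar{P}-\max_{Q\in\mathcal{P}^t}\bar{z}_v^\top\bar{Q}\big)$. So it suffices to show that the best cosine score over the retained set $\mathcal{P}^t$ falls short of the best score over the full set $\mathcal{P}^t_C$ by at most $r(v)$. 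Non-negativity is immediate, because $\mathcal{P}^t\subseteq\mathcal{P}^t_C$ and a maximum over a subset cannot exceed the maximum over the superset.

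For the upper bound, I fix $v$ and take $P^\star_v$, the maximizer over $\mathcal{P}^t_C$. Let $Q^\star_v\in\mathcal{P}^t$ attain $r(v)=\|\bar{P}^\star_v-\bar{Q}^\star_v\|_2$; it exists because $\mathcal{P}^t$ is finite and nonempty. The retained maximum is at least $\bar{z}_v^\top\bar{Q}^\star_v$, so the score loss is at most $\bar{z}_v^\top(\bar{P}^\star_v-\bar{Q}^\star_v)$. Cauchy--Schwarz with $\|\bar{z}_v\|_2=1$ bounds this by $r(v)$. Multiplying by $2$ gives the per-node bound $2r(v)$, and averaging over any node set $U$ gives $0\le\epsilon_M\le 2\,\mathbb{E}_{v\in U}[r(v)]$.

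The only real obstacle is bookkeeping for degenerate vectors: zero embeddings $z_v=0$ or zero prototypes, where $\bar{\cdot}$ is an arbitrary unit vector. Following the convention already fixed before Proposition~\ref{prop:community}, I treat $\bar{0}$ as some unit vector. Then every quantity above is still computed between unit vectors, and the identity and Cauchy--Schwarz apply verbatim. I will note that the bound holds for any choice of these directions, though it is vacuous in spirit for them. I will also remark that the bound is not tight, because the retained maximizer may do strictly better than $Q^\star_v$.
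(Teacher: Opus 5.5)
Your proof is correct and follows the paper's argument essentially step for step. You rewrite the squared-distance gap as twice the cosine-score loss, lower-bound the retained maximum by $\bar{z}_v^\top\bar{Q}^\star_v$ for the $Q^\star_v$ attaining $r(v)$, and bound $\bar{z}_v^\top(\bar{P}^\star_v-\bar{Q}^\star_v)$ by $r(v)$ via Cauchy--Schwarz with $\|\bar{z}_v\|_2=1$. Your explicit non-negativity argument from $\mathcal{P}^t\subseteq\mathcal{P}^t_C$ and your use of the $\bar{0}$ convention for zero vectors spell out details that the paper's proof leaves implicit.
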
 \noindent The best retained score is at least $\bar{z}_v^\top\bar{Q}$ for the $Q$ attaining $r(v)$, so by Cauchy--Schwarz the score drops by at most $r(v)$ and the squared distance by at most $2r(v)$ (Appendix~C.2), and Sec.~\ref{sec:eval_cost} measures $\epsilon_M$ on the test nodes ($U = V_{test}$) of the transductive benchmarks. Because the corollary bounds each node separately, the corresponding increase in the norm-weighted distance of Proposition~\ref{prop:community} is at most $2\,\mathbb{E}_{v \in S}[\|Z_{base}^{(v)}\|_2\, r(v)]$; we report the unweighted $\epsilon_M$.

The resulting prototype memory $\mathcal{P} = \{\mathcal{P}^1, \dots, \mathcal{P}^T\}$ is what the graph-level dynamic routing described next in Sec.~\ref{sec:tf_infer} uses.

\subsection{Task-Free Inference and Dynamic Routing}
\label{sec:tf_infer}

\noindent\textbf{Overview.} The final component combines the adapters of TIBA (Sec.~\ref{sec:sgc_extraction}) with the prototype memory (Sec.~\ref{sec:prototype}) to label test graphs whose task identity is unknown. As shown in Fig.~\ref{fig:framework}~(c), \model computes the parameter-free base embeddings $Z_{test}$ of the test graph, matches each test node against the prototype memory $(\mathcal{P}^1, \dots, \mathcal{P}^T)$ by cosine similarity, elects a task identity $\hat{t}$ by graph-level majority voting, and routes $Z_{test}$ through the adapter $\theta^{\hat{t}}$ and head $\phi^{\hat{t}}$ to produce labels $\hat{Y}$.

\noindent\textbf{Step 1: Projection.} We project the test graph $G_{test}$ into the structural embedding space via the same $K$-hop Laplacian smoothing as in Sec.~\ref{sec:sgc_extraction}: $Z_{test} = \hat{A}_{test}^K X_{test}$, and keep the rows of the test nodes $V_{test}$. The operator has no trained parameters and uses no labels, so queries lie in the prototypes' feature space. Under the default protocol, $G_{test}$ is the task graph supplied again by the stream, so a query is computed exactly like a base embedding of that task.

\noindent\textbf{Step 2: Prototype matching.} Let $\mathcal{P} = \{\mathcal{P}^1, \dots, \mathcal{P}^T\}$ be the set of community-aware prototypes collected from all $T$ observed tasks. For a test node $v \in V_{test}$ with embedding $Z_{test}^{(v)}$, we score its similarity to task $t$ by its maximum cosine similarity to $t$'s prototypes:
\begin{equation}
    s(v, t) = \max_{P_m^t \in \mathcal{P}^t} \frac{Z_{test}^{(v)} \cdot P_m^t}{\|Z_{test}^{(v)}\|_2 \|P_m^t\|_2}.
\end{equation}
With the maximum (rather than an average) over $\mathcal{P}^t$, a query from \textit{any} retained community of $t$, not only from one near the task mean, can be attributed to $t$; Corollary~\ref{cor:topm} covers the dropped communities.

\noindent\textbf{Step 3: Majority voting.} A single node's prediction is susceptible to local noise, so \model lets the test nodes vote on the task. Each node $v$ casts a vote for its most similar task, $\hat{t}_{v} = \arg\max_{t} s(v, t)$, and the graph-level task identity $\hat{t}$ is the mode of the votes over $V_{test}$ (a very large test graph may vote over $N$ nodes drawn uniformly at random; all reported results vote over all test nodes):
\begin{equation}
    \hat{t} = \arg\max_{t} \big|\{ v \in V_{test} : \hat{t}_v = t \}\big|.
\end{equation}
A cosine similarity involving a zero vector is set to $0$, and both $\arg\max$ steps break ties toward the earliest task (Appendix~B). This graph-level aggregation transforms noisy per-node predictions into a single reliable task assignment with no additional parameters. Assigning one task per test graph follows TPP~\cite{niu2024tpp} (Sec.~\ref{sec:problem}), which matches the test graph's mean embedding to the nearest task prototype instead of voting. On test graphs that merge two tasks, Section~\ref{sec:eval_routing} also evaluates a node-level variant without this vote. 

\noindent\textbf{Consistency of majority voting.} A plurality suffices:
\begin{proposition}\label{prop:vote}
Let $t^\star$ be the correct task, $p_t$ the fraction of test nodes in $V_{test}$ whose own vote $\hat{t}_{v}$ is task $t$, and $\gamma = p_{t^\star} - \max_{t \neq t^\star} p_t$ the plurality margin. The vote over all nodes returns $t^\star$ if $\gamma > 0$ and another task if $\gamma < 0$; in particular, $p_{t^\star} > 1/2$ suffices. If $\gamma > 0$, the vote over $N$ nodes drawn uniformly at random with replacement returns $t^\star$ with probability at least $1 - (T-1)\exp(-N\gamma^2/2)$.
\end{proposition}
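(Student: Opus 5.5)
The proof has two parts: a deterministic counting argument for the full vote, and a concentration bound for the sampled vote. Both compare $t^\star$ against one competitor at a time.

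\emph{Full vote.} Task $t$ receives exactly $p_t|V_{test}|$ votes. By the definition of $\gamma$, every $t\neq t^\star$ receives at most $(p_{t^\star}-\gamma)|V_{test}|$ votes. If $\gamma>0$, task $t^\star$ is the strict unique maximizer, so the $\arg\max$ returns it regardless of the tie-breaking rule toward the earliest task. If $\gamma<0$, some competitor has strictly more votes than $t^\star$, so the mode is not $t^\star$. For the sufficient condition, assume $p_{t^\star}>1/2$. The shares sum to one, so every other task has $p_t\le 1-p_{t^\star}$. Hence $\gamma\ge 2p_{t^\star}-1>0$.

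\emph{Sampled vote.} Fix a competitor $t\neq t^\star$ and let $N$ nodes be drawn uniformly with replacement. For the $i$-th draw, define the signed indicator $B_i=1$ if the drawn node votes $t^\star$, $B_i=-1$ if it votes $t$, and $B_i=0$ otherwise. Each node's vote $\hat t_v$ is a fixed function of the trained model and the test graph. All randomness therefore comes from the i.i.d.\ draws, so the $B_i$ are i.i.d.\ in $[-1,1]$ with mean $p_{t^\star}-p_t\ge\gamma$. This holds whatever the correlation between neighboring nodes' votes. Task $t$ ties or beats $t^\star$ in the sample only if $\frac1N\sum_i B_i\le 0$, i.e.\ the empirical mean falls below its expectation by at least $p_{t^\star}-p_t$. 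Hoeffding's inequality with range width $2$ bounds this probability by $\exp(-2N(p_{t^\star}-p_t)^2/4)\le\exp(-N\gamma^2/2)$. A union bound over the at most $T-1$ competitors shows that, with probability at least $1-(T-1)\exp(-N\gamma^2/2)$, $t^\star$ strictly beats every other task and is returned.

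The main point needing care is to avoid any independence assumption on the votes themselves. The argument must be phrased so that $B_i$ depends only on which node is drawn, not on the graph dependence between votes. The tie case also needs attention: ties are counted as failures, because tie-breaking toward the earliest task could select a competitor.
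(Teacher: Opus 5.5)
Your proposal is correct and follows the paper's proof essentially step for step. It uses the same counting argument for the full vote, including $\gamma \ge 2p_{t^\star}-1$, the same signed indicators $B_i$ with Hoeffding at range width $2$, and the same union bound over the $T-1$ competitors, and you treat ties as failures and attribute all randomness to the sampling, as the paper does.
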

\noindent The proof (a counting argument and a Hoeffding bound with a union bound over the competing tasks) is given in Appendix~C.3. Sampling is the only source of randomness, so the bound needs no independence between neighboring nodes. With many candidate tasks, the margin can be positive even when $p_{t^\star} \le 1/2$: $p_{t^\star} > 1/2$ holds in 254 of the 294 transductive test-graph evaluations, but $\gamma > 0$ holds in all 294, so every vote is correct (Sec.~\ref{sec:eval_routing}).

\noindent\textbf{Step 4: Routing.} With $\hat{t}$ inferred, \model routes $Z_{test}$ to the matching task-specific components: the selected BFA yields $Z_{adapted} = Z_{test} + \alpha \cdot \text{Adapter}_{\theta^{\hat{t}}}(Z_{test})$, and the task-specific linear head gives the labels $\hat{Y} = \arg\max \text{Head}_{\phi^{\hat{t}}}(Z_{adapted})$, row-wise over the classes of task $\hat{t}$.

\noindent\textbf{Putting the three components together.} Steps~1--3 use no trained parameters, and Step~4 uses only those of the \pd{voted task}. \pd{A} new task adds its own prototypes, adapter, and head and never changes those of earlier tasks. An earlier task \pd{thus loses} accuracy only when another task's prototypes win its vote (Proposition~\ref{prop:vote}; Appendix~D.4), and BWT is 0.00 wherever the vote is correct (Sec.~\ref{main-indivisual}).


\subsection{Efficiency Analysis}\label{subsec:complexity}
Let $|V_t|$ and $|E_t|$ be the numbers of nodes and edges of task graph $G_t$, $d$ the feature dimension, $b$ the bottleneck width, $n_c=\max_t|\mathcal{C}_t|$ the largest number of classes of a task, $K$ the number of hops, $M$ the maximum number of prototypes per task, $e$ the number of epochs, and $T$ the number of observed tasks.

\noindent\textbf{Training.} Per task, smoothing costs $\mathcal{O}(K|E_t|d)$ and is paid once outside the epoch loop; adapter and head training costs $\mathcal{O}(e|V_t|d(b+n_c))$; Louvain partitioning~\cite{blondel2008fast} empirically costs $\mathcal{O}(|E_t|\log|V_t|)$, and prototype aggregation costs $\mathcal{O}(|V_t|d)$. A message-passing GNN with $L$ layers of width $h$ instead pays $\mathcal{O}(eL(|E_t|h+|V_t|dh))$, because propagation is repeated in every epoch. 

\noindent\textbf{Inference.} For a test graph with $|E_{test}|$ edges and $N_{test}=|V_{test}|$ test nodes, projection costs $\mathcal{O}(K|E_{test}|d)$, prototype matching $\mathcal{O}(N_{test}TMd)$, and routing plus classification $\mathcal{O}(N_{test}d(b+n_c))$, so the matching cost grows linearly in $T$. 

\noindent\textbf{Memory.} \model never stores raw nodes, attributes, edges, or labels; at inference, it reads the queried graph from the data system, as any method must, whereas replay keeps an extra copy. Each task adds one adapter, one head, and at most $M$ prototypes, so with 32-bit floating-point (FP32) parameters, the task-growing storage after $T$ tasks is \begin{equation}\label{eq:memory} \mathrm{Mem}(T) \le 4T\big[(2db+b+d) + (dn_c+n_c) + Md\big]\ \text{bytes}, \end{equation} where the three terms are the adapter, the head, and the prototypes, with equality when every task has $n_c$ classes and keeps $M$ prototypes. The storage added per task is therefore bounded by a constant, and for fixed $b$, $n_c$, and $M$, the total grows as $\mathcal{O}(Td)$, as in TPP and TAAM~\cite{niu2024tpp,liu2026taam}. Sec.~\ref{sec:eval_memory} compares this footprint with equal-size replay buffers, and Sec.~\ref{sec:eval_cost} breaks down the time.

%% file: section/ch4_experiments.tex
\section{Performance Evaluation}
\label{sec:eval}

\subsection{Experimental Settings}

\subsubsection{Dataset Construction}
\label{sec:mixed_construction}
We evaluate \model under the three regimes formalized in Sec.~\ref{sec:problem} (transductive, inductive, and mixed). For datasets without timestamps, we adopt the class-IL paradigm of CGLB~\cite{zhang2023cglb}: the continuum is partitioned by node class, and each task introduces new, non-overlapping classes. After each task, every method is evaluated on the test nodes of all tasks so far.

\smallskip\noindent\textbf{Benchmarks.} We use seven public graph benchmarks: four transductive (CoraFull, Arxiv, Reddit, and Products from the CGLB benchmark protocol~\cite{zhang2023cglb}) and three inductive (Kindle, DBLP, and ACM from the TACO protocol~\cite{han2024taco}). Mixed-setting streams are constructed on top of both groups using the randomized scheduler of Appendix~A.1. Per-dataset statistics are given in Table~\ref{data_table}, and detailed feature descriptions in Appendix~A.1.

\input{tables/dataset}

\subsubsection{Comparison Methods and Evaluation Metrics}
We compare \model against eleven baselines spanning three CGL families, namely (i) three regularization-based (EWC~\cite{Kirkpatrick2017ewc}, MAS~\cite{Aljundi2018mas}, TWP~\cite{liu2021twp}), (ii) four replay-based (ER-GNN~\cite{zhou2021ergnn}, SSM~\cite{zhang2022ssm}, CaT~\cite{liu2023cat}, TACO~\cite{han2024taco}), and (iii) four parameter-isolated replay-free (TPP~\cite{niu2024tpp}, TAAM~\cite{liu2026taam}, CAGNN~\cite{xiao2026cagnn}, SAFER~\cite{zhang2026safer}). We do not compare with Pi-GNN~\cite{zhang2023pignn} and HPNs~\cite{Zhang2023hpns}, whose parameter-isolation and prototype-based designs are represented by the four more recent replay-free baselines above. We also include fine-tuning as a lower bound and joint training as an offline reference. We assess performance using two standard metrics~\cite{chaudhry20219continual}: AP$\uparrow$ and BWT$\uparrow$; formal definitions appear in Appendix~A.2. We additionally report wall-clock runtime from model construction to the final evaluation, including each method's own preprocessing. All results are the mean $\pm$ standard deviation across three independent runs; a reported standard deviation of 0.00 denotes a value below 0.005, and Appendix~A.2 details the seeds.

\subsubsection{Implementation}
Baselines are run with the CaT~\cite{liu2023cat} codebase (built on CGLB~\cite{zhang2023cglb}) or with each method's official code, ported to the same pipeline using each paper's optimal configuration. \model uses one untuned configuration for all benchmarks and settings ($K{=}2$, $M{=}10$, $\alpha{=}1$), with bottleneck width $b=\min(\max(\lfloor d/4\rfloor,16),128)$.
All baselines except TWP, SAFER, and CAGNN use a two-layer SGC~\cite{wu2019sgc} with 256-dim hidden units as the backbone (frozen after pre-training in TPP~\cite{niu2024tpp}), since TAAM is built on a frozen SGC and stronger backbones would confound strategy with capacity. TWP, SAFER, and CAGNN keep the backbones they are defined on. TWP~\cite{liu2021twp} penalizes changes to the graph attention network (GAT)~\cite{velivckovic2018graph} parameters that are important for its loss and its attention-based aggregation. CAGNN~\cite{xiao2026cagnn} uses its own dendritic graph convolutions. SAFER~\cite{zhang2026safer} trains task modules on a two-layer graph convolutional network (GCN)~\cite{kipf2017semi} that is pre-trained with GRACE~\cite{zhu2020grace} on the first task and then frozen. Each of these backbones adds an activation between its graph convolutions, so keeping them does not favor \model. For \model, we train for 500 epochs. All methods in Tables~\ref{tab:main_results_transductive} and~\ref{tab:main_results_inductive} and the scalability study (Sec.~\ref{sec:eval_cost}) are run and timed with the same script on one NVIDIA A100 (80GB) server with PyTorch 2.4.1. The noise and runtime-breakdown experiments are run on an NVIDIA RTX A4000, and some rows of the equal-memory comparison (Sec.~\ref{sec:eval_memory}) are run on an A100, so running times are compared only among methods in the same table or figure. Replay-based baselines use per-task memory budgets of 100, 200, 300, and 400 nodes on CoraFull, Arxiv, Reddit, and Products, and 100, 200, and 300 nodes on Kindle, DBLP, and ACM (4,000 nodes in total on Arxiv after 20 tasks). Codebase URLs are listed in Appendix~A.2.

\subsection{Experimental Results}

\subsubsection{Main Results on the Individual Settings}
\label{main-indivisual}

Tables~\ref{tab:main_results_transductive} and~\ref{tab:main_results_inductive} show the results in the transductive and inductive settings, respectively. 

\smallskip\noindent
\textbf{(1) Transductive setting.}
As shown in Table~\ref{tab:main_results_transductive}, under the single-task test-graph protocol, \model achieves the highest AP on all four datasets, ahead of the second-best baseline by 0.65 points on CoraFull (SAFER), 1.41 on Arxiv (TPP), 0.04 on Reddit (TAAM), and 0.22 on Products (CAGNN). Its BWT is 0.00 across all benchmarks. The Reddit and Products margins are about one standard deviation, so on these two datasets \model matches the strongest baseline in AP while running $2.0\times$ and $6.9\times$ faster, respectively. It even surpasses joint training, which learns all tasks with one shared classifier and thus remains exposed to inter-task interference, whereas \model isolates each task in its own adapter and head. Joint training is therefore an offline reference, not an upper bound: after each task, one SGC and one classifier over all classes seen so far are trained further on the training nodes of all tasks to date. The two also differ in their label sets: once a test graph is routed, \model chooses only from that task's classes, whereas joint training chooses from all classes seen so far. TACO is competitive in the inductive setting (Table~\ref{tab:main_results_inductive}) but reaches only 16.93\% AP on CoraFull here, as its replay relies on a continuously coarsened graph buffer.

\smallskip\noindent
\textbf{(2) Inductive setting.}
As shown in Table~\ref{tab:main_results_inductive}, \model achieves the highest AP among CGL methods on Kindle and ACM, exceeding the second-best baseline by 5.93 and 6.32 points. On DBLP, it trails TACO by 1.21 points and TPP by 0.73 points but keeps a BWT of 0.00 and runs $7.9\times$ and $3.7\times$ faster than them, respectively.

\smallskip\noindent
\textbf{(3) Robust efficiency.}
Beyond accuracy, \model runs faster end to end than the most accurate competing baseline on every benchmark (Tables~\ref{tab:main_results_transductive}--\ref{tab:main_results_inductive}), from $1.3\times$ against TPP on Arxiv to $7.9\times$ against TACO on DBLP, and avoids the out-of-memory error that TACO hits on Products. ER-GNN and SSM, which train on a small fixed replay buffer, are faster on the four transductive benchmarks and on Kindle, and TAAM is about 8\% faster on Products, but each of them trails \model there by 4.17--57.75 points of AP.
The two most recent replay-free baselines, CAGNN and SAFER, are $2.1$--$15.8\times$ slower than \model across all benchmarks.

\smallskip\noindent
\textbf{(4) Performance visualization.} Accuracy-matrix heatmaps on Products (Fig.~\ref{fig:visualization}) show that the per-task accuracy of \model does not fade, while the replay-based baselines CaT and SSM show fading columns, the signature of catastrophic forgetting. TAAM's columns also fade, although its per-task modules are frozen, because it predicts across all classes seen so far, allowing later classes to compete with earlier ones for nodes from earlier tasks.

\input{tables/main_result_transductive}

\input{tables/main_result_inductive}

\begin{figure*}[t]
  \centering
  \includegraphics[width=0.9\textwidth]{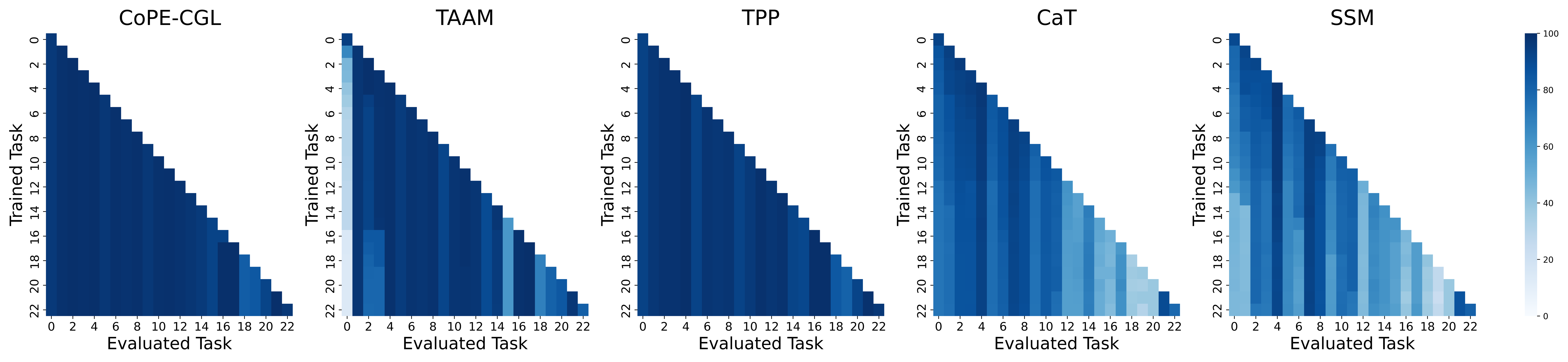}
  \caption{Per-task accuracy matrices on Products (row: after training task $t$; column: task $j$). Darker is higher.}
  \label{fig:visualization}
\end{figure*}

\subsubsection{Main Results on the Mixed Settings}
\label{sec:mixed_res}
Table~\ref{tab:results_mixed} reports the mixed-setting results across seven datasets, varying the transductive-to-inductive task ratio from $1{:}9$ to $9{:}1$. \model achieves the best AP in 33 of the 35 configurations, the shortest runtime in 34, and a BWT of zero or above in 33. Because each adapter and head is frozen after its task, BWT can change only through routing, that is, when later tasks alter which task wins the vote for an earlier test graph. The two exceptions in AP are DBLP at $1{:}9$ and $3{:}7$, where TPP exceeds \model by 1.47 and 2.50 points (89.51\% and 89.03\% for TPP vs.\ 88.04\% and 86.53\% for \model), respectively. However, TPP takes $1.93\times$ and $2.02\times$ as long (38.08 and 39.29~s for TPP vs.\ 19.70 and 19.46~s for \model). The only runtime exception is Products at $1{:}9$, where TAAM is faster (543.15~s for TAAM vs.\ 653.70~s for \model) but has lower AP (88.12\% vs.\ 95.45\%, in the same order). Negative BWT occurs only on Kindle at $3{:}7$ ($-4.44$) and DBLP at $9{:}1$ ($-0.08$, within one standard deviation of zero).

\input{tables/mixed_setting_ratio1}

\subsubsection{Ablation Studies}\label{sec:ablation}
Table~\ref{ablation} reports ablations on CoraFull (transductive) and Kindle (inductive) with four reduced variants, each removing one component (Laplacian smoothing, bottleneck, community-aware prototypes, or majority voting). Without smoothing, $K=0$ and $Z_{base}=X$; without the vote, one task is still chosen per test graph, correctly for 32--33 of the 35 CoraFull test graphs. Removing Laplacian smoothing lowers AP by 2.87 points on CoraFull and causes the largest AP drop on Kindle (40.11 points), so Kindle relies most on the structural prior $Z_{base}$. Removing the bottleneck yields the fastest variant (36.04~s on CoraFull and 4.84~s on Kindle), since it trains no adapter, but it lowers AP by 6.04 points on Kindle. Removing community-aware prototypes lowers AP by 5.26 points on CoraFull and 1.43 points on Kindle. Removing majority voting causes the largest drop on CoraFull (6.24 points) and lowers AP by 8.21 points on Kindle.
\input{tables/ablation}

\subsection{Memory Efficiency at Matched Storage}
\label{sec:eval_memory}

Like TPP and TAAM, \model adds at most a constant amount of storage per task (Eq.~\eqref{eq:memory}). We therefore compare methods at matched final storage on Arxiv and CoraFull (Table~\ref{tab:equal_memory}). We fix \model's bottleneck to $b=13$, below the default width given in Appendix~A.2, and set each baseline's memory knob (replay budget, prompt size, or rank) to the value whose final storage is closest to that of \model. The parameter-free smoothing operator is excluded from the storage count of all methods, and replay storage counts the stored FP32 node features and their structure.

At about 0.4\,MB on Arxiv and 48\,MB on CoraFull, \model outperforms the strongest replay baseline, CaT, by 33.48 and 15.82 points, keeps BWT at 0.00, and runs $5.9\times$ and $14.0\times$ faster. Most of this gap is prediction scope: CaT chooses among all classes seen so far, whereas \model chooses within the task its vote selects (Sec.~\ref{sec:problem}). Given the true task identifier (task-IL), CaT reaches 94.28\% and 95.82\%, within 0.89 points of \model, which infers the task itself. Table~\ref{tab:equal_memory} splits the methods by forgetting. The replay methods forget at this size (BWT from $-4.97$ to $-16.49$), whereas the parameter-isolation methods keep BWT within 1.43 points of zero. A replay buffer of this size holds at most 40 raw nodes per task, whereas \model spends the same number of bytes on an adapter and on the prototypes of the largest communities in each task graph. \model also leads every parameter-isolation baseline on both datasets, by 0.60 points over TPP on Arxiv and 0.43 over SAFER on CoraFull, and is the fastest method (TAAM, SAFER, CAGNN, and TPP on CoraFull were timed on an A100 with seeds 1042--1044). At equal storage, its adapter and prototypes buy more accuracy than prompts, subspaces, or low-rank modulators. CAGNN is the only baseline shown in its default configuration; its storage is far above this budget (825.66\,MB and 3004.06\,MB, about $2{,}000\times$ and $63\times$ that of \model). This setting favors CAGNN: its storage is set by its dendritic network's width rather than by a memory knob like those above, keeping its default capacity. It still trails \model by 1.84 and 1.06 points.

\input{tables/equal_memory}

\subsection{Routing Accuracy and Mixed-Task Test Graphs}
\label{sec:eval_routing}

\subsubsection{Correlated Routing Errors and Task-ID Accuracy}
Proposition~\ref{prop:vote} requires only that the correct task win the vote, not that node errors be independent. Table~\ref{tab:routing_corr} measures how these errors correlate on the four transductive benchmarks. They are positively correlated along edges, up to 0.899 on CoraFull, which is consistent with neighboring nodes sharing smoothed features. Graph-level routing picks the correct task in all 294 cases (98 test graphs, three runs), even in the 40 cases where at most half of the nodes select it on their own; Arxiv accounts for 16 of these 40 cases (out of its 60). The vote needs enough nodes: with 10 test nodes sampled per test graph, routing accuracy is 86.5--100\% on the four benchmarks, and 98.1--100\% with 100 (Appendix~E). The default protocol is the easier case, since each test graph keeps all nodes and edges of its task graph (Sec.~\ref{sec:problem}); Sec.~\ref{sec:mixed_task} drops this assumption.

\input{tables/routing_correlation}

Fig.~\ref{fig:task_id_acc} reports task-ID accuracy under a $5{:}5$ transductive-to-inductive ratio in the mixed setting, where historical nodes reappear in later tasks and task boundaries overlap. Baselines that rely on a single task representation lose accuracy on some datasets; for example, TAAM drops to $74.33\%$ on Kindle. \model identifies tasks nearly perfectly on CoraFull, Arxiv, and DBLP.

\begin{figure}[t]
  \centering
  \includegraphics[width=0.70\linewidth]{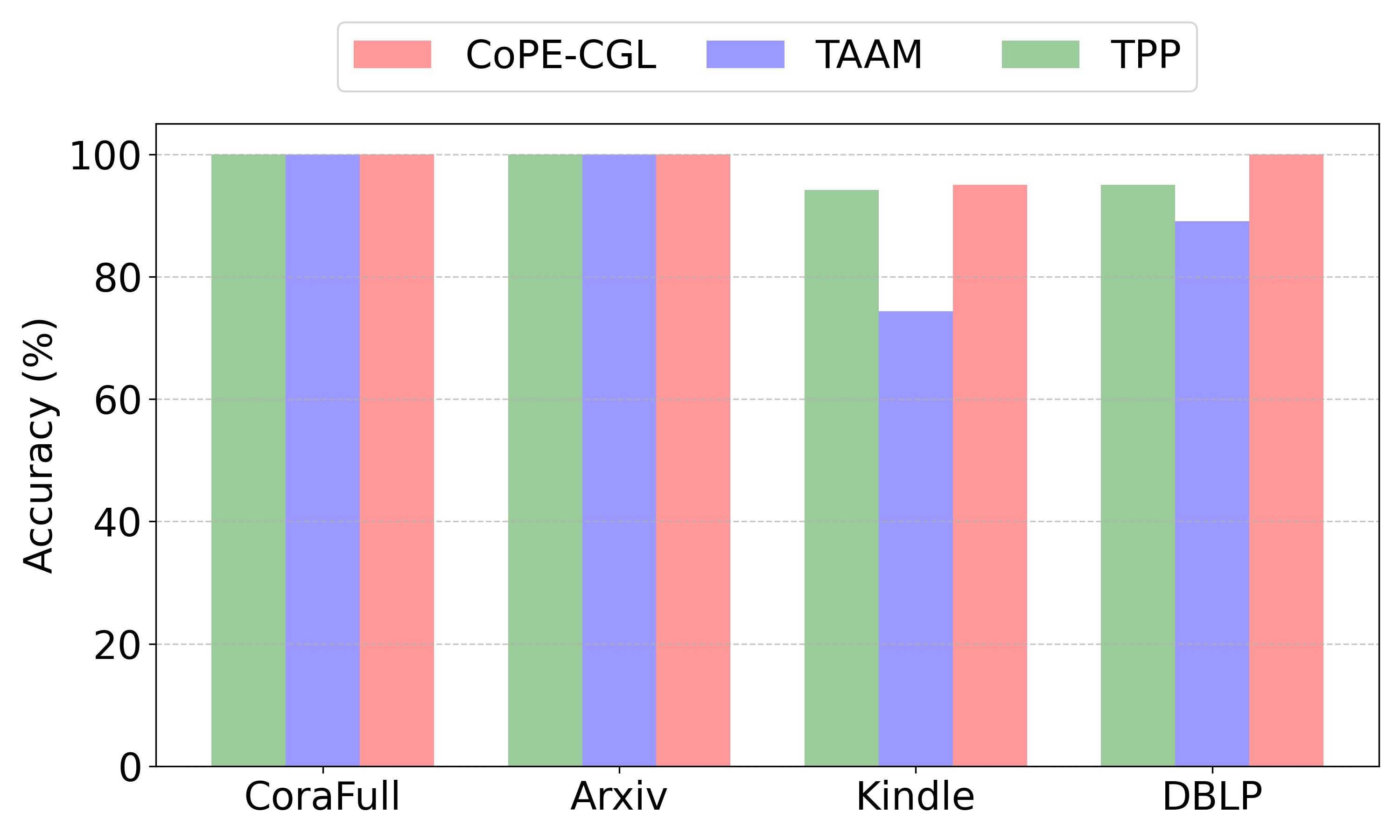}
  \caption{Task-ID accuracy under the $5{:}5$ mixed setting.}
  \label{fig:task_id_acc}
\end{figure}

\subsubsection{Mixed-Task Test Graphs}\label{sec:mixed_task}
Under the default protocol, all nodes of a test graph come from one task (Sec.~\ref{sec:problem}). We now evaluate a harder and more practical case that no parameter-isolation baseline tests in its own paper: the nodes of one test graph come from several tasks and keep the edges between them, as when a deployed model receives a graph snapshot that spans tasks. TPP~\cite{niu2024tpp} and TAAM~\cite{liu2026taam} infer one task per test graph, CAGNN~\cite{xiao2026cagnn} is given the task identity, and SAFER~\cite{zhang2026safer} routes each node but is never tested on a graph that joins nodes of different tasks. For each pair of consecutive tasks, we merge their test nodes into one graph and restore the edges between them that exist in the original full graph. We evaluate a node-level variant of \model's routing that skips the graph-level vote: each node is assigned to the task whose prototypes it matches best among all tasks observed so far, and is classified by that task's adapter and head. For a fair comparison, TPP, TAAM, SAFER, and CAGNN are evaluated using the same node-level routing, while the remaining baselines, which lack task inference, use their single shared classifier. For these four methods, node-level routing is our adaptation. This setting works against \model: it removes the graph-level vote, which costs \model 6.24 points on CoraFull and 8.21 points on Kindle in the ablation (Table~\ref{ablation}), whereas SAFER routes nodes by design.

Excluding joint training, \model attains the best AP on mixed-task test graphs (Mixed AP) on both Arxiv (55.86\%) and Reddit (96.59\%), ahead of the strongest baseline by 15.74 points (CaT) and 23.72 points (TPP), as shown in Table~\ref{tab:mixed_node}. From single-task (Native) to mixed-task (Mixed) test graphs, \model loses 39.04 AP points on Arxiv and 2.99 on Reddit, less than any other parameter-isolation method. TPP and TAAM each summarize a task with a single representation (TPP with a spectral profile and a prompt, TAAM with a task embedding), whereas \model keeps up to $M$ community prototypes per task, giving each node several candidate matches within its own task. On Arxiv, the remaining loss comes from routing rather than classification: even on single-task test graphs, only 55.46\% of nodes select the correct task on their own, against 97.10\% on Reddit (Table~\ref{tab:routing_corr}). Joint training needs no routing, which is why it scores higher than \model on the Arxiv mixed graphs (69.33\%).

\input{tables/mixed_node_routing}

\subsection{Robustness to Graph Structure}
\label{sec:eval_robust}

\smallskip\noindent\textbf{Noisy edges.} Both the smoothing operator and the communities of \model are computed from observed edges, so edge noise propagates into its features, prototypes, and routing. To every task graph of two transductive and two inductive benchmarks, we add uniformly random node pairs as undirected edges, numbering 10--50\% of that graph's edge list, and compare with the replay-free baselines TPP, TAAM, SAFER, and CAGNN (Table~\ref{tab:noisy}). \model has the highest AP in 19 of the 20 settings, including all 16 noisy ones; only TPP leads on clean DBLP (by 0.95 points). From 0\% to 50\% noise, \model loses 5.09 points on average, less than TPP (7.72), SAFER (7.28), and CAGNN (8.32). Only TAAM loses less on average (4.85), but this reflects an exceptional case rather than robustness: one of its tasks on Arxiv fails in the clean run, so its Arxiv AP is higher with noise than without. At 50\% noise, \model still leads on every dataset by 1.66--3.57 points.

\input{tables/noisy_edges}

\smallskip\noindent\textbf{Weak community structure.} Proposition~\ref{prop:community} ties prototype quality to community structure, so we test whether tasks with weaker structure obtain lower AP. Table~\ref{tab:modularity} correlates per-task AP with per-task modularity $Q$ over all tasks of the seven benchmarks. \model shows no significant linear correlation on any benchmark (Pearson $p>0.05$). Where community structure does matter, it matters for the baselines as well. On Products and DBLP, the rank correlation is positive and significant for \model ($\rho=0.49$ and $0.85$) and for three of the baselines on each. On Kindle, three of the four baselines show a significant linear correlation, although it comes from only five tasks and differs in sign across methods. Strongly heterophilic graphs remain outside our scope.

\input{tables/modularity_q}

\smallskip\noindent\textbf{Choice of community detector.}
\label{diverse-community-detection}
On CoraFull, \model reaches 95.78--95.79\% AP with each of five community detection algorithms, above every baseline, so our default Louvain can be replaced without loss (Appendix~D.6).

\subsection{Cost Breakdown, Scalability, and Sensitivity}
\label{sec:eval_cost}

\smallskip\noindent\textbf{Where the time goes.} Table~\ref{tab:runtime} decomposes \model's wall-clock time on all seven benchmarks. The components are timed in separate profiling runs, and their times add up to the end-to-end time of those runs. On five benchmarks, adapter training dominates (71.9--87.4\%). On the two largest graphs, Reddit and Products, community detection dominates instead (70.1\% and 69.6\%), because Louvain runs on the CPU over task graphs with millions of edges. This cost is paid once per task during training; inference runs no community detection and routes with the stored prototypes. Adding a task therefore costs one smoothing pass, one adapter and head training run, and one partition of the new task graph, and leaves the stored components of earlier tasks untouched. On Arxiv (A100, one seed), a new task takes 3.15\,s of training on average at a peak GPU memory of 170\,MB, against 3.37\,s and 236\,MB for TPP and 14.93\,s and 266\,MB for CaT. On a 70-task CoraFull stream with one class per task, storage grows linearly with the number of tasks, and routing one test graph takes 15.2\,ms at $T{=}70$ with 97.1\% routing accuracy (Appendix~E). Routing and classification by the final model take at most 0.5\% of the time. Another 1.3--13.9\% is spent re-evaluating all earlier tasks after each new task; only AP and BWT need this, not deployment.

\input{tables/runtime_breakdown}

\smallskip\noindent\textbf{Scalability.} Fig.~\ref{fig:scale_test} shows how \model, TPP, and TAAM scale on Products as the data volume grows from $20\%$ to $100\%$. We compare with TPP and TAAM because they are the fastest replay-free baselines on Products (510.75 and 398.78 seconds in Table~\ref{tab:main_results_transductive}); SAFER and CAGNN already take $2.1\times$ and $6.9\times$ as long as \model on the full data. On AP (Fig.~\ref{fig:scale_test}(a)), \model leads both baselines at every scale; with only $20\%$ of the data, it already exceeds TPP's AP at $100\%$ (95.88\% for \model\ at $20\%$ vs.\ 95.42\% for TPP). On runtime (Fig.~\ref{fig:scale_test}(b)), \model is faster than TPP at every scale. TAAM is marginally faster than \model at $100\%$ but trails it by 9.78 points in AP.

\begin{figure}[t]
 \begin{subfigure}[t]{0.40\linewidth}
   \centering
   \includegraphics[width=.99\textwidth]{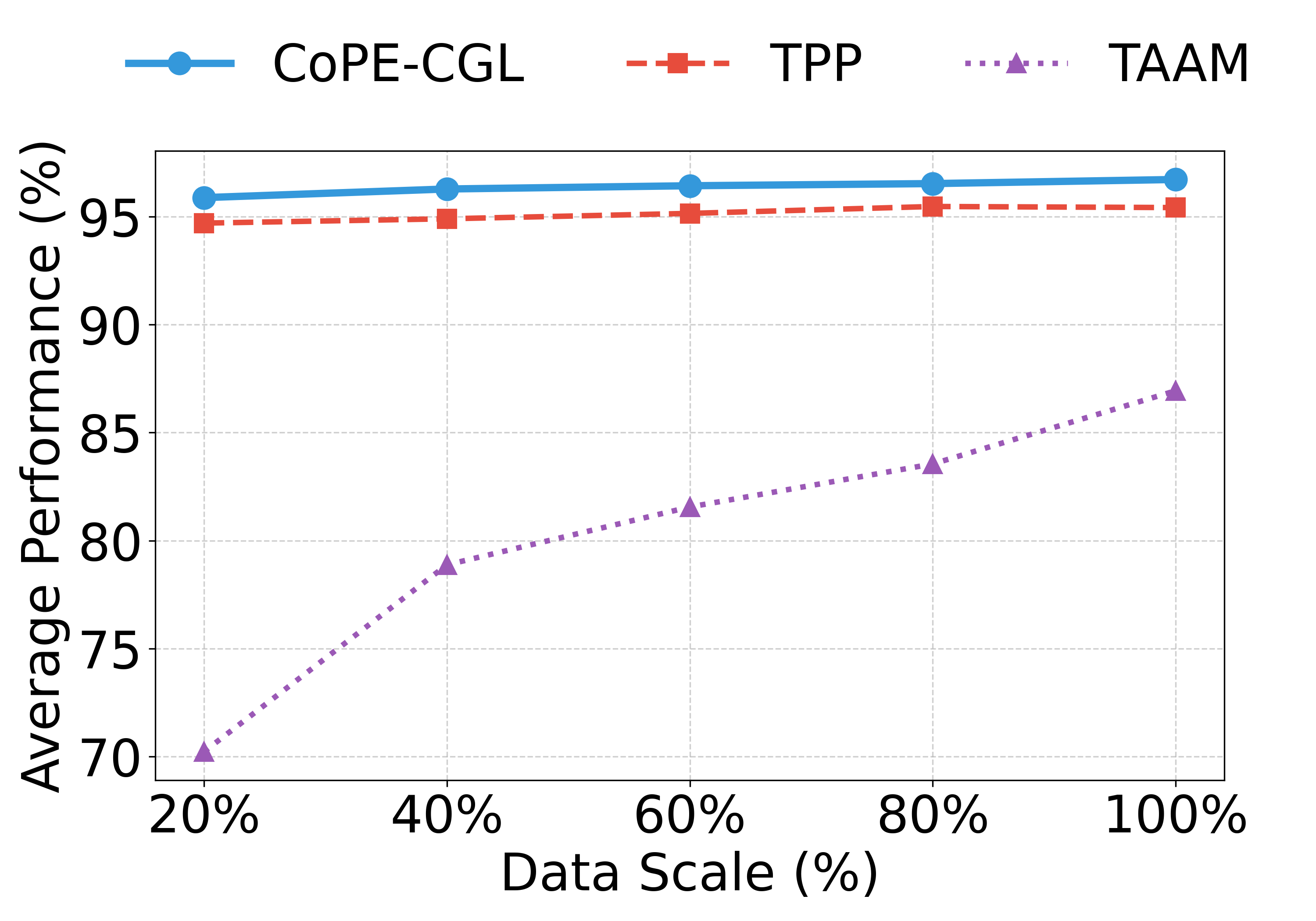}
   \caption{AP}
   \label{fig:scale-a}
 \end{subfigure}
 \begin{subfigure}[t]{0.40\linewidth}
   \centering
   \includegraphics[width=.99\textwidth]{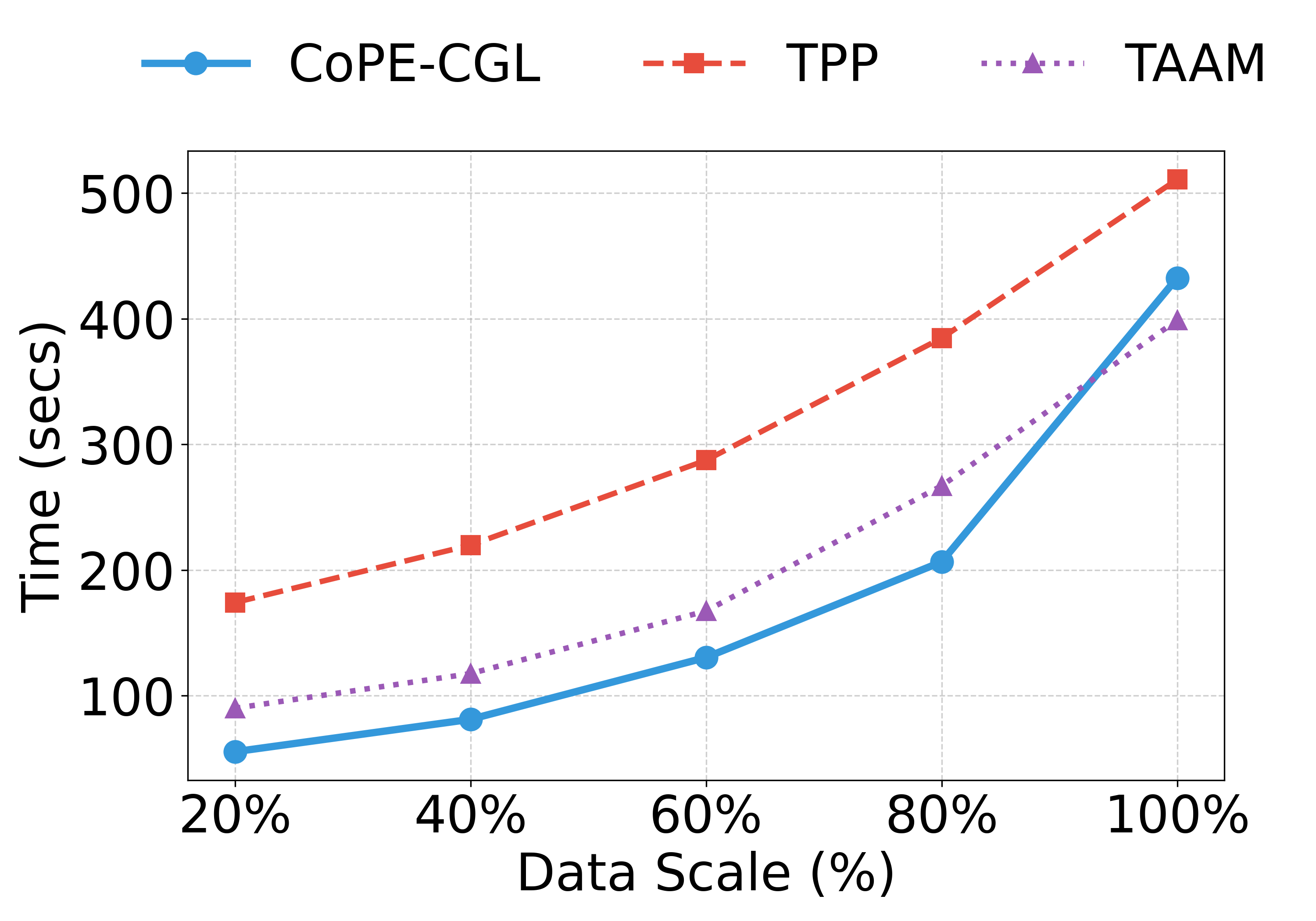}
   \caption{Running time}
   \label{fig:scale-b}
 \end{subfigure}
 \caption{Scalability of \model, TPP, and TAAM on Products: (a) AP, (b) wall-clock running time. The $100\%$ points are from Table~\ref{tab:main_results_transductive}; the others are means over seeds 1024--1026.}
 \label{fig:scale_test}
\end{figure}

\smallskip\noindent\textbf{Parameter sensitivity.} Fig.~\ref{fig:params1} shows the sensitivity of \model's AP to three hyperparameters on Products. For both the number of smoothing hops $K$ and the adapter scaling weight $\alpha$, \model outperforms TPP and TAAM at every tested value except $\alpha=0$, where the adapter is removed and \model is on par with TPP. For the prototype count $M$, AP plateaus for $6 \le M \le 19$, a range that contains our default of $M=10$.

\begin{figure*}[t]
\centering
 \begin{subfigure}[t]{0.23\linewidth}
   \centering
   \includegraphics[width=\textwidth]{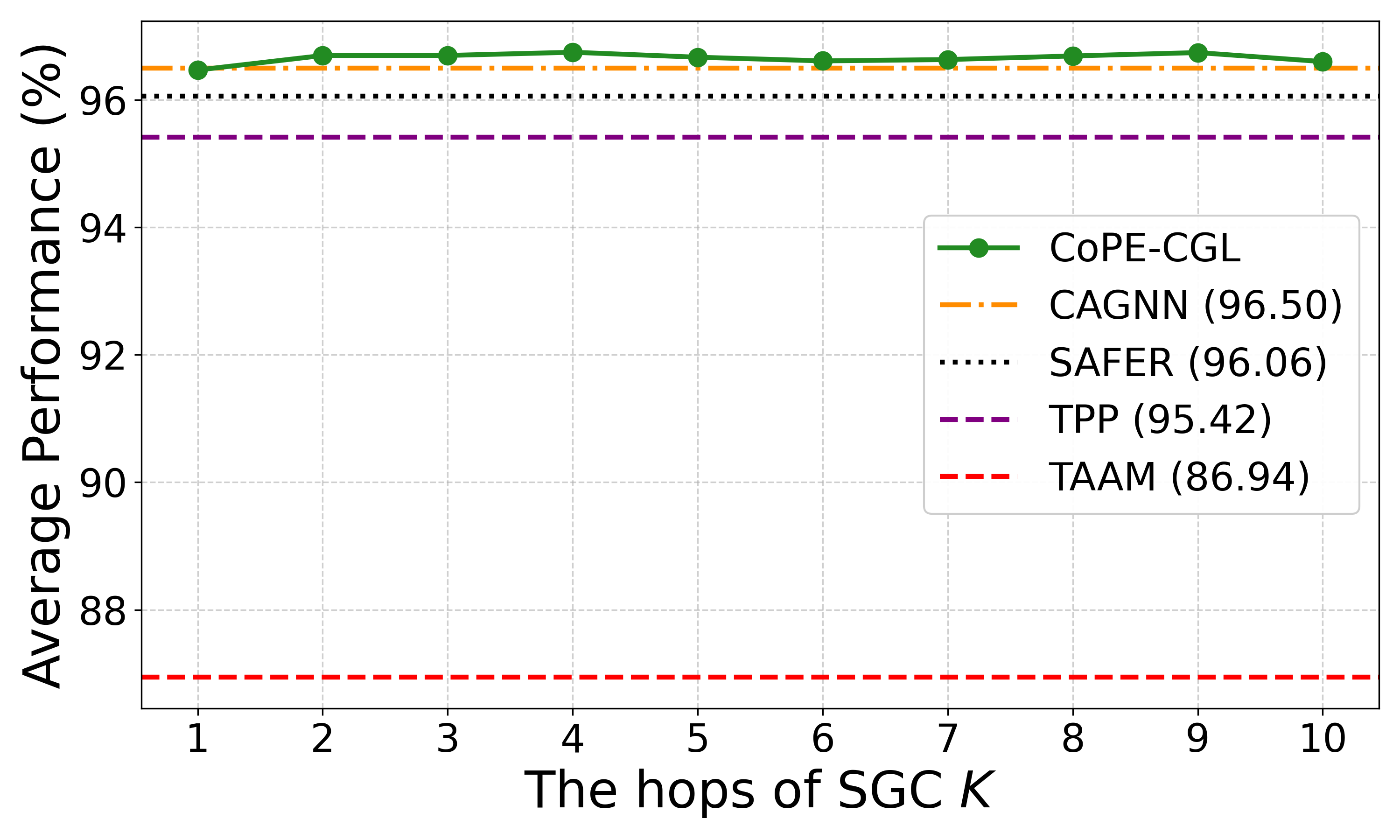}
   \caption{$K$-hops}
   \label{fig:params1_1}
 \end{subfigure}
 \hfill
 \begin{subfigure}[t]{0.23\linewidth}
   \centering
   \includegraphics[width=\textwidth]{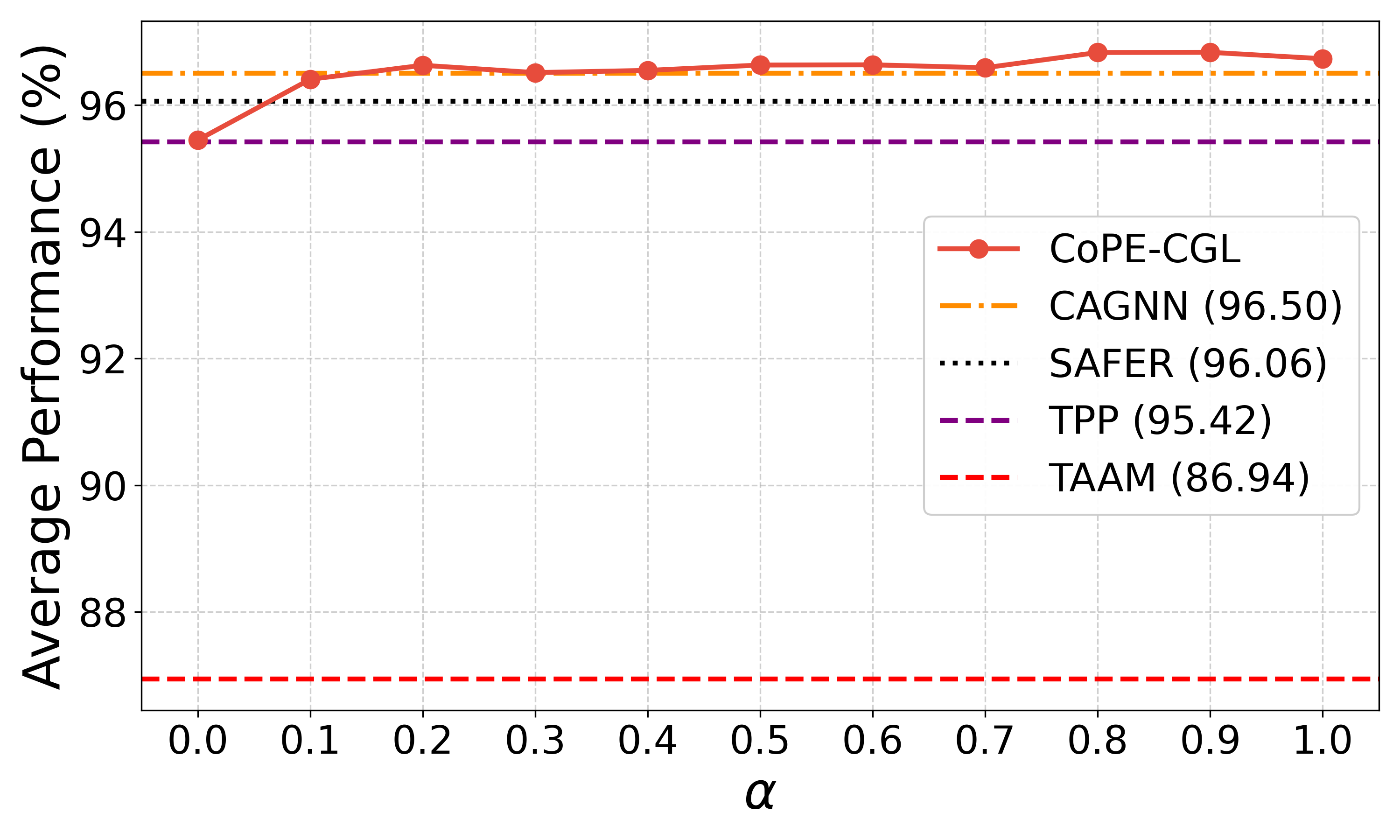}
   \caption{$\alpha$}
   \label{fig:params1_2}
 \end{subfigure}
 \hfill
 \begin{subfigure}[t]{0.23\linewidth}
   \centering
   \includegraphics[width=\textwidth]{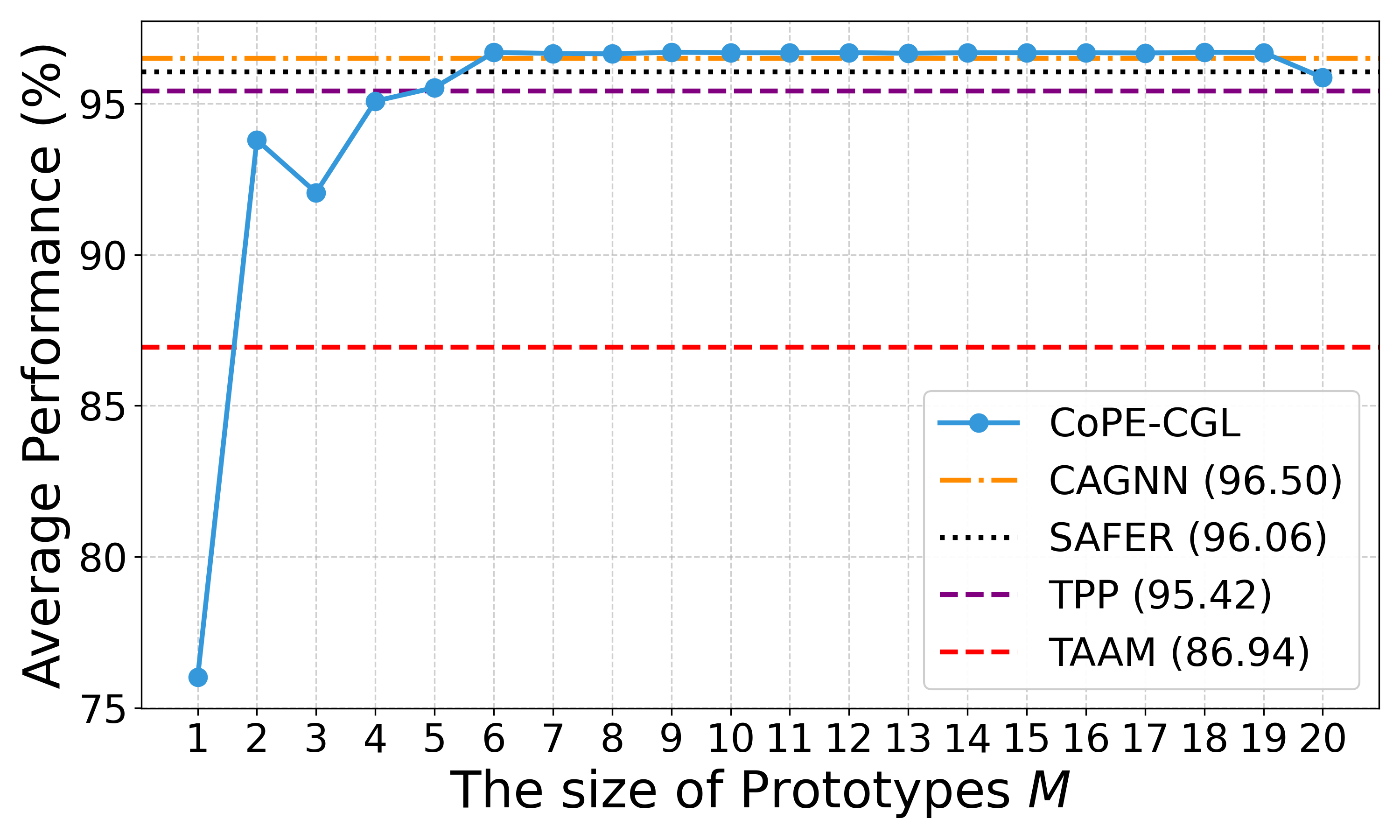}
   \caption{Prototype count $M$}
   \label{fig:params1_3}
 \end{subfigure}
 \caption{Sensitivity of \model's AP on Products to three hyperparameters: smoothing hops $K$, adapter scaling weight $\alpha$, and prototype count $M$. Horizontal lines mark CAGNN, SAFER, TPP, and TAAM at their default settings (Table~\ref{tab:main_results_transductive} means). $\alpha$ is a fixed weight. \model curves are single runs (seed 1024); horizontal lines are three-seed means.}
 \label{fig:params1}
\end{figure*}

\smallskip\noindent\textbf{Cost of keeping only $M$ prototypes.} Corollary~\ref{cor:topm} bounds how far keeping only the $M$ largest communities can move a query from its best prototype. At $M=10$, the measured truncation error $\epsilon_M$ on the four transductive benchmarks ranges from 0.57\% (Reddit) to 17.32\% (Products) of the mean pairwise squared distance between community prototypes (Appendix~C.2 lists all four benchmarks). This is consistent with the AP plateau for $6\le M\le 19$ in Fig.~\ref{fig:params1_3}.

%% file: tables/dataset.tex
\begin{table}[!ht]
\centering
\caption{Dataset statistics; rows with the subscript $_{mixed}$ (e.g., CoraFull$_{mixed}$) are the mixed-setting streams.}
\resizebox{0.82\columnwidth}{!}{
\begin{tabular}{c|c|c|c|c|c}
\hline
 \textbf{Dataset}   & \textbf{\# Nodes} & \textbf{\# Edges} & \textbf{\# Features}  & \textbf{\# Classes} & \textbf{\# Tasks}  \\ 
 \hline
 \hline
 CoraFull & 19,793 & 130,622 & 8,710 & 70 & 35 \\
 Arxiv & 169,343 & 1,166,243 & 128 & 40 & 20 \\
 Reddit & 227,853 & 114,615,892 & 602 & 40 & 20 \\
 Products & 2,449,028 & 61,859,036 & 100 & 46 & 23 \\
 Kindle & 38,450 & 458,018& 457& 6 & 5 \\
 DBLP & 54,265 & 2,751,530& 244& 4 & 10 \\
 ACM & 77,130 & 476,572& 1,018& 4 & 10 \\
 CoraFull$_{mixed}$ & 19,793 & 130,622 & 8,710 & 70 & 32 \\
 Arxiv$_{mixed}$ & 169,343 & 1,166,243 & 128 & 40 & 20 \\
 Reddit$_{mixed}$ & 227,853 & 114,615,892 & 602 & 40 & 20 \\
 Products$_{mixed}$ & 2,449,028 & 61,859,036 & 100 & 46 & 20 \\
Kindle$_{mixed}$ & 38,450 & 458,018& 457& 6 & 12 \\
 DBLP$_{mixed}$ & 54,265 & 2,751,530& 244& 4 & 10 \\
 ACM$_{mixed}$ & 77,130 & 476,572& 1,018& 4 & 10 \\
 \hline

\end{tabular}
}
\label{data_table}
\end{table}

%% file: tables/main_result_transductive.tex
\begin{table*}[t]
\centering
\Huge
\caption{Transductive results: AP, BWT, and wall-clock running time. Bold: best AP excluding Joint. OOM: out of memory.}

\resizebox{\textwidth}{!}{
\fontsize{50}{52}\selectfont
\renewcommand{\arraystretch}{1.1}
\setlength{\tabcolsep}{14pt}
\begin{tabular}{llcccccccccccc}
\toprule
\multirow{2}{*}{\textbf{Category}} & \multirow{2}{*}{\textbf{Methods}} 
& \multicolumn{3}{c}{\textbf{CoraFull}} 
& \multicolumn{3}{c}{\textbf{Arxiv}} 
& \multicolumn{3}{c}{\textbf{Reddit}} 
& \multicolumn{3}{c}{\textbf{Products}} \\
\cmidrule(lr){3-5} \cmidrule(lr){6-8} \cmidrule(lr){9-11} \cmidrule(lr){12-14}
 & & AP (\%) $\uparrow$ & BWT (\%) $\uparrow$ & Time (secs) $\downarrow$
   & AP (\%) $\uparrow$ & BWT (\%) $\uparrow$ & Time (secs) $\downarrow$
   &AP (\%) $\uparrow$ & BWT (\%) $\uparrow$ & Time (secs) $\downarrow$
   &AP (\%) $\uparrow$ & BWT (\%) $\uparrow$ & Time (secs) $\downarrow$
   \\
\midrule

\multirow{1}{*}{Lower bound} 
& Finetuning &  3.21±1.12&   -95.01±0.31&104.58±5.81&4.99±0.11
    &  -96.51±0.11 &71.56±7.56& 4.96±0.28&-99.31±0.56&477.47±6.06 &7.03±0.09&-92.21±0.27&984.67±9.22
 \\
\midrule

\multirow{3}{*}{Regularization} 
& EWC  &  2.94±0.45&-94.03±2.77&363.52±8.63& 4.94±0.01 & -96.87±0.17 & 127.05±2.36&5.10±0.10 & -99.4±0.11 &270.7±4.25& 14.18±0.23& -92.54±0.45&430.00±1.42\\
& MAS  &2.84±0.21 & -94.13±0.75&162.34±1.62& 4.93±0.53 & -94.53±1.13 &80.62±0.45& 10.26±2.52 & -93.43±2.72 & 208.65±1.62&9.45±1.21 & -89.51±1.71&362.92±0.12\\
& TWP  &  20.90±2.23 &  -61.47±0.82 & 650.71±10.65 & 6.07±1.84 & -51.24±1.92 & 295.19±7.73 & 2.11±2.63	& -54.49±4.39& 	2698.21±1.06&  3.19±2.12& 	-39.30±2.46	& 4811.18±4.77\\
\midrule

\multirow{4}{*}{Replay}
& ER-GNN &   72.34±0.25 & -7.45±0.12 & 45.11±1.32 & 65.63±0.31  &-11.72±0.24 &15.18±2.35 &88.91±0.18  &-1.38±0.05 & 38.81±3.92&38.97±0.29  &-15.84±0.36&51.29±1.77
\\
& SSM     &  75.02±0.20&  -5.27±0.10& 49.91±0.12& 66.91±0.27 & -10.94±0.33& 14.87±2.28  & 95.43±0.09&  -0.26±0.02 & 39.24±3.85  &61.08±0.34 & -11.21±0.41 &50.73±1.84
\\
& CaT     & 82.14±0.18 &  -3.81±0.08 &376.21±5.32  &  67.32±0.24  & -10.19±0.28 &  115.83±2.87  &96.85±0.11 &  -0.33±0.03  &1404.66 ±10.25  & 69.73±0.26 &  -5.71±0.22
 & 1868.35±15.61  \\
 & TACO & 16.93±0.29  & 0.00±0.00  & 448.51±8.12  &
8.87±0.23   & 0.00±0.00 &  3241.26±23.74  &
41.02±0.38 &  0.00±0.00  & 29112.47±79.13 &\multicolumn{3}{c}{OOM Error}\\
\midrule

\multirow{5}{*}{Parameter-isolation}
& TPP\footnotemark &  93.78±0.33 & 0.00±0.00 & 75.61±1.32 & 93.83±0.10 & 0.00±0.00 & 48.12±0.77  &99.50±0.01 & 0.00±0.00 & 425.79±2.08 & 95.42±0.13 & 0.00±0.00 & 510.75±0.52 \\
& TAAM & 89.14±0.29 & -0.09±0.01 &	154.24±0.10 &	82.96±1.09 &	-3.41±0.01 &	61.63±0.02 &	99.56±0.03 &	0.00±0.00 &	344.73±1.16 &	86.94±1.95 &	-5.67±0.01 &	398.78±1.26 \\
& CAGNN & 94.90±0.04 & 0.00±0.00 & 481.15±49.65 & 93.05±0.03&0.00±0.00 & 278.54±16.91&99.54±0.01&0.00±0.00 & 2126.62±206.82&96.50±0.10&0.00±0.00&2999.70±220.82\\
& SAFER & 	95.13±0.16& 0.00±0.00 & 216.46±11.18 & 93.26±0.06 &0.00±0.00& 132.31±1.74 & 99.55±0.01 & 0.00±0.00& 551.74±170.74 & 96.06±0.02 &0.00±0.00& 889.09±3.60\\
& \model (Ours)  & \textbf{95.78±0.24} &	0.00±0.00 &	51.96±2.61 &	\textbf{95.24±0.07} &	0.00±0.00 &	35.70±0.35 &	\textbf{99.60±0.02} &	0.00±0.00 &	168.77±12.69 &	\textbf{96.72±0.14} &	0.00±0.00 &	432.47±65.62
 \\
\midrule

\multirow{1}{*}{Full dataset} 
& Joint &  83.31±0.06 &  -2.45±0.05 & 195.13±1.26  &
61.36±0.04  & -17.33±0.07 &   156.79±0.65&
97.95±0.00  & -0.51±0.01 & 2372.42±14.15  &
64.85±0.17 & -9.99±0.35 & 9079.45±40.66
\\
\bottomrule
\end{tabular}
}
\label{tab:main_results_transductive}

\end{table*}
\footnotetext{All methods use the same splits and class order; the original TPP paper reports 85.4 (Arxiv) and 94.0 (Products) under its own splits.}

%% file: tables/main_result_inductive.tex
\begin{table*}[t]
\centering

\caption{Main results in the inductive setting: AP, BWT, and wall-clock running time. Bold: best AP excluding Joint.}

\resizebox{0.99\textwidth}{!}{
\fontsize{32}{34}\selectfont
\begin{tabular}{llcccccccccccc}
\toprule
\multirow{2}{*}{\textbf{Category}} & \multirow{2}{*}{\textbf{Methods}} 
& \multicolumn{3}{c}{\textbf{Kindle}} 
& \multicolumn{3}{c}{\textbf{DBLP}} 
& \multicolumn{3}{c}{\textbf{ACM}}  \\
\cmidrule(lr){3-5} \cmidrule(lr){6-8} \cmidrule(lr){9-11} 
 & & AP (\%) $\uparrow$ & BWT (\%) $\uparrow$ & Time (secs) $\downarrow$
   & AP (\%) $\uparrow$ & BWT (\%) $\uparrow$ & Time (secs) $\downarrow$
   &AP (\%) $\uparrow$ & BWT (\%) $\uparrow$ & Time (secs) $\downarrow$
   \\
\midrule

\multirow{1}{*}{Lower bound} 
& Finetuning & 67.42±10.13  &  -17.61±10.42   & 1.20±0.03&
65.73±7.86    & -19.34±6.58   &1.43±0.02& 
59.28±8.74   &  -17.92±8.89  &1.12±0.01
 \\
\midrule

\multirow{3}{*}{Regularization} 
& EWC  &  75.41±7.92   & -9.62±8.03  & 60.29±0.12 &
77.26±4.43  &  -7.94±3.75  & 65.16±0.04&
64.73±6.01  &  -11.28±5.84 &71.77±0.23
\\
& MAS  &  70.58±6.84  &  -7.12±6.87 &   62.5±0.10  & 
72.33±3.92  &  -5.89±3.12  &  102.5±0.03  &
59.84±5.12  &  -8.76±4.02&  106.92±0.19\\
& TWP  &  77.42 ±4.38  &  -8.12 ±4.54  & 108.73±0.34&
78.31 ±3.48    &-7.36 ±3.01 &  187.41±0.11&
64.12 ±6.84  & -12.08 ±6.52&192.63±0.27\\
\midrule

\multirow{4}{*}{Replay}
& ER-GNN &  51.27±1.33 & 5.75±1.14 & 3.83±0.02 & 81.98±0.32 &  1.40±0.08  & 30.38±0.91  &
64.74±0.41 &  3.11±0.15 &  31.53±1.02
\\
& SSM     &  68.66±0.82 & 5.05±1.20 & 2.75±0.02 & 82.19±0.21&    1.51±0.09&    31.42±0.94  & 
68.76±0.43  &  3.04±0.16&    32.58±1.07
 \\
& CaT     &  65.60±0.36 &   7.11±0.42 &   62.67±1.08  & 
80.32±0.40  &  0.77±0.09 &   102.84±1.96  & 
61.99±1.41  &  5.00±11.58  & 172.40±2.39

 \\
& TACO &  78.61±0.42&   -5.17±0.51 &  43.00±3.26  &
\textbf{83.97±0.38}  & -0.43±0.33 &  140.95±12.64  &
70.73±0.45 &  -6.57±0.41 & 46.98±8.41
 \\

\midrule

\multirow{5}{*}{Parameter-isolation}
& TPP &  71.21±0.24	&  0.00±0.00	&  19.87±0.11	&  83.49±0.10&  	0.00±0.00&  	65.30±0.53	&  70.71±0.17&  	0.00±0.00&  	31.55±0.13 \\
& TAAM  &  78.09±0.28	&	0.00±0.00	&	18.88±0.31	&	79.86±0.03	&	0.00±0.00	&	30.96±0.89	&	70.21±0.40	&	-3.98±1.13	&	66.91±0.27\\
& CAGNN & 77.56±2.03&-8.23±7.88& 84.16±9.82 &78.12±0.97&-4.33±0.97&281.14±30.31 &70.35±1.22&-6.67±0.08&179.22±0.17\\
& SAFER & 78.32±1.18&0.00±0.00& 42.74±0.54 & 78.81±0.35&0.00±0.00&	105.97±0.40 & 	70.86±0.08&-3.57±0.71&	92.07±0.31\\
& \model (Ours)  &  \textbf{84.54±1.01}	&	-0.88±1.52	&	7.82±0.06	&	82.76±0.13	&	0.00±0.00	&	17.77±0.18	&	\textbf{77.18±1.81}	&	-1.19±2.07	&	17.15±0.53\\
\midrule

\multirow{1}{*}{Full dataset} 
& Joint &  85.29±2.52 & -0.33±0.09 &30.37±0.62&
88.12±1.52  &  0.94±0.15   &107.58±2.84&
77.41±1.66  &  2.14±0.65&81.55±1.97\\
\bottomrule

\end{tabular}
}

\label{tab:main_results_inductive}
\end{table*}

%% file: tables/mixed_setting_ratio1.tex
\begin{table*}[t]
\centering
\Huge
\caption{Mixed-setting results of \model and the strongest replay and replay-free baselines. Bold: best AP. On DBLP$_{mixed}$, the \model row was rerun on an NVIDIA A100 (80GB) with the same non-deterministic GPU kernels as the baselines.}
\setlength{\tabcolsep}{2.5pt}
\resizebox{0.92\textwidth}{!}{
\renewcommand{\arraystretch}{1.1}
\begin{tabular}{lccccccccccccccc}

\hline
\multicolumn{15}{c}{$\textbf{Kindle}_{mixed}$} \\
\cmidrule(lr){2-16}
Trans: Induct 
& \multicolumn{3}{c}{1:9}
& \multicolumn{3}{c}{3:7}
& \multicolumn{3}{c}{5:5}
& \multicolumn{3}{c}{7:3}
& \multicolumn{3}{c}{9:1} \\
\cmidrule(lr){2-4} \cmidrule(lr){5-7} \cmidrule(lr){8-10} \cmidrule(lr){11-13} \cmidrule(lr){14-16}
Methods
& AP (\%) $\uparrow$ & BWT (\%) $\uparrow$ & Time (secs) $\downarrow$
& AP (\%) $\uparrow$ & BWT (\%) $\uparrow$  & Time (secs) $\downarrow$
& AP (\%) $\uparrow$ & BWT (\%) $\uparrow$  & Time (secs) $\downarrow$
& AP (\%) $\uparrow$ & BWT (\%) $\uparrow$  & Time (secs) $\downarrow$
& AP (\%) $\uparrow$ & BWT (\%) $\uparrow$   & Time (secs) $\downarrow$ \\
\hline
TACO     & 56.56±0.11	& -2.84±0.05	& 42.59±0.17	& 55.95±0.14	& -3.27±0.02	& 47.18±0.09	& 61.51±0.11& 	-0.64±0.03	& 42.67±0.14	& 61.16±0.05& 	-1.37±0.02& 	43.50±0.09	& 59.18±0.10	& -3.57±0.04	& 46.42±0.13\\
TPP      &  71.37±0.03& 	0.00±0.00	& 18.90±0.08& 	71.53±0.05	& 0.00±0.00	& 18.90±0.03& 	72.22±0.04& 	0.00±0.00	& 18.70±0.02	& 69.40±0.09	& 0.00±0.00	& 18.70±0.06	& 70.78±0.05& 	0.00±0.00& 	18.80±0.03\\
TAAM      & 82.36±0.08& 	0.00±0.00& 	18.50±0.07& 	83.50±0.09& 	0.00±0.00& 	18.20±0.07& 	81.10±0.08& 	0.00±0.00	& 18.40±0.10& 	83.90±0.05	& 0.00±0.00	& 18.40±0.11& 82.91±0.06& 	0.00±0.00& 	18.50±0.05 \\
CAGNN & 82.16±1.83 & -5.43±0.62 & 82.56±10.26 & 83.92±0.75 & -1.88±1.56 & 81.89±10.06 & 82.73±2.44 & -3.44±1.43 & 82.09±9.51 & 84.73±0.64 & -4.19±0.93 & 81.43±10.39 & 85.12±2.52 & -3.96±2.81 & 82.56±10.06 \\
SAFER & 80.80±0.76 & 0.00±0.00 & 39.84±0.30 & 82.99±0.16 & 0.00±0.00 & 40.28±0.21 & 82.80±0.53 & 0.00±0.00 & 39.87±0.58 & 84.10±0.93 & 0.00±0.00 & 40.50±0.13 & 84.34±0.01 & 0.00±0.00 & 40.19±0.19 \\
\model &  \textbf{86.68±0.02}	& 0.00±0.00	& 6.71±0.03	& \textbf{84.18±0.00}& 	-4.44±0.16& 	5.84±0.02& 	\textbf{85.28±0.04}& 	0.00±0.00	& 7.01±0.00& 	\textbf{88.35±0.06}& 	0.00±0.00	& 7.07±0.03& 	\textbf{89.17±0.06}& 	0.00±0.00	& 6.82±0.09 \\
\hline

\hline
\multicolumn{15}{c}{$\textbf{DBLP}_{mixed}$} \\
\cmidrule(lr){2-16}
Trans: Induct 
& \multicolumn{3}{c}{1:9}
& \multicolumn{3}{c}{3:7}
& \multicolumn{3}{c}{5:5}
& \multicolumn{3}{c}{7:3}
& \multicolumn{3}{c}{9:1} \\
\cmidrule(lr){2-4} \cmidrule(lr){5-7} \cmidrule(lr){8-10} \cmidrule(lr){11-13} \cmidrule(lr){14-16}
Methods
& AP (\%) $\uparrow$ & BWT (\%) $\uparrow$ & Time (secs) $\downarrow$
& AP (\%) $\uparrow$ & BWT (\%) $\uparrow$ & Time (secs) $\downarrow$
& AP (\%) $\uparrow$ & BWT (\%) $\uparrow$ & Time (secs) $\downarrow$
& AP (\%) $\uparrow$ & BWT (\%) $\uparrow$ & Time (secs) $\downarrow$
& AP (\%) $\uparrow$ & BWT (\%) $\uparrow$ & Time (secs) $\downarrow$ \\
\hline
TACO     & 79.55±0.09	&	-7.75±0.11	&	79.11±0.13	&	80.90±0.15	&	-3.61±0.08	&	88.12±0.16	&	64.59±0.13	&	-20.94±0.53	&	89.97±0.13	&	66.75±0.13	&	-9.66±0.09	&	95.58±0.17	&	62.95±0.16	&	-0.45±0.07& 124.37±0.24	\\
TPP & \textbf{89.51±0.19} & 0.00±0.00 & 38.08±1.25 & \textbf{89.03±0.02} & 0.00±0.00 & 39.29±2.11 & 89.51±0.08 & 0.00±0.00 & 39.10±2.04 & 89.32±0.09 & 0.00±0.00 & 37.87±1.77 & 89.16±0.15 & 0.00±0.00 & 37.90±1.66 \\
TAAM & 87.54±0.23 & 0.00±0.00 & 33.93±1.22 & 88.33±0.57 & 0.00±0.00 & 33.67±1.55 & 88.75±0.02 & 0.00±0.00 & 34.24±1.14 & 89.04±0.21 & 0.00±0.00 & 35.18±1.28 & 89.78±0.05 & 0.00±0.00 & 35.67±1.54 \\
CAGNN & 85.05±0.13 & -1.64±0.08 & 245.18±87.10 & 85.68±2.26 & -2.29±3.61 & 283.74±38.58 & 84.08±3.85 & -4.91±4.25 & 247.14±87.61 & 79.17±0.16 & -10.32±0.19 & 248.55±87.84 & 73.61±5.36 & -16.75±6.03 & 250.12±89.00 \\
SAFER & 87.09±0.16 & 1.02±0.71 & 105.12±0.45 & 87.92±0.06 & 0.00±0.00 & 105.43±0.37 & 88.88±0.57 & 0.00±0.00 & 106.67±0.62 & 88.73±0.04 & 0.00±0.00 & 106.86±0.26 & 88.51±0.04 & 0.00±0.00 & 107.42±0.74 \\
\model & 88.04±1.06 & 0.16±1.04 & 19.70±0.99 & 86.53±2.52 & 1.20±2.42 & 19.46±0.66 & \textbf{90.10±0.21} & 0.00±0.00 & 17.40±0.81 & \textbf{90.26±0.16} & 0.00±0.00 & 19.09±1.32 & \textbf{90.34±0.32} & -0.08±0.14 & 20.06±0.61\\
\hline

\hline
\multicolumn{15}{c}{$\textbf{ACM}_{mixed}$} \\
\cmidrule(lr){2-16}
Trans: Induct 
& \multicolumn{3}{c}{1:9}
& \multicolumn{3}{c}{3:7}
& \multicolumn{3}{c}{5:5}
& \multicolumn{3}{c}{7:3}
& \multicolumn{3}{c}{9:1} \\
\cmidrule(lr){2-4} \cmidrule(lr){5-7} \cmidrule(lr){8-10} \cmidrule(lr){11-13} \cmidrule(lr){14-16}
Methods
& AP (\%) $\uparrow$ & BWT (\%) $\uparrow$ & Time (secs) $\downarrow$
& AP (\%) $\uparrow$ & BWT (\%) $\uparrow$ & Time (secs) $\downarrow$
& AP (\%) $\uparrow$ & BWT (\%) $\uparrow$ & Time (secs) $\downarrow$
& AP (\%) $\uparrow$ & BWT (\%) $\uparrow$ & Time (secs) $\downarrow$
& AP (\%) $\uparrow$ & BWT (\%) $\uparrow$ & Time (secs) $\downarrow$ \\
\hline
TACO     & 69.27±0.09	&	-11.66±0.16	&	31.27±0.25	&	71.75±0.13	&	-9.26±0.06	&	32.52±0.09	&	75.04±0.14	&	-2.50±0.09	&	30.58±0.09	&	62.43±0.16	&	-18.10±0.35	&	30.96±0.42	&	72.39±0.16	&	0.00±0.00	&	34.37±0.15\\
TPP      & 65.03±0.07	&	0.00±0.00	&	31.30±0.25	&	67.57±0.05	&	0.00±0.00	&	30.90±0.11	&	66.07±0.09	&	-4.40±0.21	&	31.10±0.13	&	76.50±0.19	&	0.00±0.00	&	31.30±0.08	&	68.87±0.04	&	0.00±0.00	&	31.13±0.12 \\
TAAM      &  76.84±0.02	&	0.00±0.00	&	57.25±0.13	&	76.96±0.08	&	-1.56±0.06	&	57.64±0.23	&	81.06±0.08	&	0.45±0.00	&	57.50±0.02	&	83.75±0.06	&	0.41±0.00	&	57.51±0.03	&	84.39±0.05	&	-1.11±0.00	&	57.44±0.25	\\
CAGNN & 78.01±1.75 & -2.19±1.48 & 166.24±7.95 & 78.78±0.54 & -2.39±0.51 & 170.68±15.20 & 74.72±0.30 & -10.05±0.39 & 161.90±0.71 & 83.36±1.20 & -1.58±0.69 & 173.73±15.49 & 76.31±0.15 & -1.47±0.23 & 172.78±12.08 \\
SAFER & 68.28±0.12 & -5.55±1.58 & 85.63±0.00 & 71.44±0.59 & -8.89±0.00 & 85.22±1.00 & 64.88±5.64 & -1.61±6.31 & 85.03±0.57 & 74.66±0.01 & -8.89±0.00 & 84.52±1.26 & 64.38±0.22 & -12.23±7.86 & 82.96±0.63 \\
\model &  \textbf{81.10±0.03}	&	0.00±0.00	&	20.30±0.11	&	\textbf{82.00±0.06}	&	0.00±0.00	&	21.07±0.07	&	\textbf{83.66±0.06}	&	0.00±0.00	&	21.21±0.13	&	\textbf{85.62±0.07}	&	0.00±0.00	&	20.85±0.09	&	\textbf{84.41±0.06}	&	0.00±0.00	&	20.81±0.07	\\
\hline

\hline
\multicolumn{15}{c}{$\textbf{CoraFull}_{mixed}$} \\
\cmidrule(lr){2-16}
Trans: Induct 
& \multicolumn{3}{c}{1:9}
& \multicolumn{3}{c}{3:7}
& \multicolumn{3}{c}{5:5}
& \multicolumn{3}{c}{7:3}
& \multicolumn{3}{c}{9:1} \\
\cmidrule(lr){2-4} \cmidrule(lr){5-7} \cmidrule(lr){8-10} \cmidrule(lr){11-13} \cmidrule(lr){14-16}
Methods
& AP (\%) $\uparrow$ & BWT (\%) $\uparrow$ & Time (secs) $\downarrow$
& AP (\%) $\uparrow$ & BWT (\%) $\uparrow$ & Time (secs) $\downarrow$
& AP (\%) $\uparrow$ & BWT (\%) $\uparrow$ & Time (secs) $\downarrow$
& AP (\%) $\uparrow$ & BWT (\%) $\uparrow$ & Time (secs) $\downarrow$
& AP (\%) $\uparrow$ & BWT (\%) $\uparrow$  & Time (secs) $\downarrow$\\
\hline
CaT     &  35.24±0.31	&	-26.17±0.43	&	1520.25±4.13	&	28.96±2.13	&	-16.97±5.01	&	1481.30±6.11	&	43.60±0.63	&	-8.16±0.17	&	1530.83±3.34	&	59.41±0.42	&	-6.89±0.09	&	1478.80±0.98	&	75.24±0.09	&	-5.18±0.10	&	1465.40±2.81\\
TPP      & 89.59±0.02	&	0.00±0.00	&	91.54±0.29	&	90.55±0.01	&	0.00±0.00	&	88.25±0.32	&	89.85±0.03	&	0.00±0.00	&	83.50±0.11	&	90.03±0.07	&	0.00±0.00	&	79.12±0.23	&	91.57±0.03	&	0.00±0.00	&	77.47±0.33	\\
TAAM      &  90.89±0.02	&	0.00±0.00	&	201.91±0.76	&	90.53±0.03	&	-0.05±0.00	&	184.25±0.47	&	89.63±0.01	&	0.00±0.00	&	175.31±0.65	&	90.23±0.02	&	0.00±0.00	&	167.49±0.31	&	89.37±0.03	&	-0.05±0.00	&	153.07±0.85	\\

CAGNN & 25.67±2.39 & -15.50±1.08 & 507.62±64.91 & 43.28±1.91 & -9.68±0.37 & 504.67±62.74 & 56.91±2.45 & -6.49±0.92 & 495.87±61.04 & 76.28±0.08 & -1.99±0.16 & 491.94±61.68 & 88.98±0.30 & -0.04±0.01 & 489.27±55.37 \\
SAFER & 53.44±0.95 & 0.00±0.00 & 257.85±12.86 & 61.91±1.29 & 0.00±0.00 & 246.55±5.49 & 71.87±1.68 & 0.00±0.00 & 244.06±4.04 & 82.48±0.13 & 0.00±0.00 & 228.95±10.73 & 90.42±1.80 & 0.00±0.00 & 226.59±6.87 \\

\model & \textbf{96.09±0.01}	&	0.00±0.00	&	66.74±0.13	&	\textbf{96.19±0.01}	&	0.00±0.00	&70.26±0.28	&	\textbf{94.28±0.02}	&	0.00±0.00	&	64.27±0.16	&	\textbf{96.17±0.02}	&	0.00±0.00	&	60.57±0.22	&	\textbf{95.97±0.01}	&	0.00±0.00	&	69.25±0.23	\\
\hline

\hline
\multicolumn{15}{c}{$\textbf{Arxiv}_{mixed}$} \\
\cmidrule(lr){2-16}
Trans: Induct 
& \multicolumn{3}{c}{1:9}
& \multicolumn{3}{c}{3:7}
& \multicolumn{3}{c}{5:5}
& \multicolumn{3}{c}{7:3}
& \multicolumn{3}{c}{9:1} \\
\cmidrule(lr){2-4} \cmidrule(lr){5-7} \cmidrule(lr){8-10} \cmidrule(lr){11-13} \cmidrule(lr){14-16}
Methods
& AP (\%) $\uparrow$ & BWT (\%) $\uparrow$ & Time (secs) $\downarrow$
& AP (\%) $\uparrow$ & BWT (\%) $\uparrow$ & Time (secs) $\downarrow$
& AP (\%) $\uparrow$ & BWT (\%) $\uparrow$ & Time (secs) $\downarrow$
& AP (\%) $\uparrow$ & BWT (\%) $\uparrow$ & Time (secs) $\downarrow$
& AP (\%) $\uparrow$ & BWT (\%) $\uparrow$ & Time (secs) $\downarrow$ \\
\hline
CaT     &  30.91±0.13	&	1.12±0.08	&	359.38±2.01	&	23.78±2.18	&	-7.20±0.09	&	313.62±0.74	&	37.14±0.14	&	-4.95±0.02	&	274.03±0.68	&	49.85±0.19	&	-6.65±0.05	&	299.25±1.25	&	61.51±0.09	&	-7.03±0.07	&	276.49±0.48\\
TPP      &  94.59±0.00	&	0.00±0.00	&	54.80±0.14	&	94.55±0.00	&	0.00±0.00	&	53.97±0.16	&	92.39±0.00	&	-2.29±0.08	&	54.13±0.14	&	94.74±0.00	&	0.00±0.00	&	52.03±0.15	&	92.41±0.00	&	-1.89±0.00	&	49.82±0.14\\
TAAM      & 81.92±0.03	&	-3.95±0.00	&	67.77±0.00	&	83.25±0.05	&	-2.21±0.00	&	64.08±0.00	&	80.16±0.06	&	-4.73±0.00	&	64.38±0.00	&	82.01±0.02	&	-2.99±0.00	&	63.34±0.00	&	83.17±0.04	&	-1.43±0.00	&	61.15±0.00\\
CAGNN & 25.14±2.96 & -2.79±3.63 & 342.78±12.83 & 40.41±2.22 & -1.07±0.82 & 316.35±0.11 & 54.31±5.28 & -1.29±1.58 & 322.37±22.92 & 70.02±2.52 & -0.05±0.10 & 298.77±4.93 & 86.57±0.38 & 0.01±0.01 & 286.79±11.31 \\
SAFER & 41.02±0.04 & -6.86±0.14 & 158.49±3.92 & 56.63±2.71 & -3.41±1.73 & 150.69±0.08 & 64.33±0.53 & -5.46±1.76 & 147.33±1.93 & 77.22±2.68 & -2.23±3.15 & 147.96±3.83 & 88.78±0.20 & 0.00±0.00 & 140.61±0.92 \\
\model &  \textbf{94.73±0.01}	&	0.00±0.00	&	33.52±0.14	&	\textbf{95.02±0.02}	&	0.00±0.00	&	37.06±0.09	&	\textbf{95.35±0.01}	&	0.00±0.00	&	34.74±0.12	&	\textbf{95.28±0.02}	&	0.00±0.00	&	36.15±0.08	&	\textbf{95.30±0.01}	&	0.00±0.00	&	20.14±0.06	\\
\hline

\hline
\multicolumn{15}{c}{$\textbf{Reddit}_{mixed}$} \\
\cmidrule(lr){2-16}
Trans: Induct 
& \multicolumn{3}{c}{1:9}
& \multicolumn{3}{c}{3:7}
& \multicolumn{3}{c}{5:5}
& \multicolumn{3}{c}{7:3}
& \multicolumn{3}{c}{9:1} \\
\cmidrule(lr){2-4} \cmidrule(lr){5-7} \cmidrule(lr){8-10} \cmidrule(lr){11-13} \cmidrule(lr){14-16}
Methods
& AP (\%) $\uparrow$ & BWT (\%) $\uparrow$ & Time (secs) $\downarrow$
& AP (\%) $\uparrow$ & BWT (\%) $\uparrow$ & Time (secs) $\downarrow$
& AP (\%) $\uparrow$ & BWT (\%) $\uparrow$ & Time (secs) $\downarrow$
& AP (\%) $\uparrow$ & BWT (\%) $\uparrow$ & Time (secs) $\downarrow$
& AP (\%) $\uparrow$ & BWT (\%) $\uparrow$ & Time (secs) $\downarrow$ \\
\hline
CaT     &  80.22±0.11	&	-12.36±0.13	&	1366.10±5.26	&	88.46±0.42	&	-0.77±0.11	&	1290.61±4.13	&	88.49±0.13	&	-1.88±0.08	&	1263.93±3.42	&	95.78±0.05	&	-0.15±0.04	&	1185.39±3.74	&	93.23±0.04	&	-0.68±0.00	&	1063.40±4.14	 \\
TPP      &  99.32±0.02	&	0.00±0.00	&	470.10±1.13	&	99.19±0.01	&	0.00±0.00	&	464.50±2.39	&	99.11±0.02	&	0.00±0.00	&	443.65±3.25	&	99.33±0.03	&	0.00±0.00	&	397.17±5.13	&	99.24±0.02	&	0.00±0.00	&	360.49±6.13 \\
TAAM      &  99.52±0.03	&	0.00±0.00	&	455.78±0.98	&	99.54±0.01	&	0.00±0.00	&	420.28±1.33	&	99.53±0.02	&	-0.03±0.00	&	394.98±0.85	&	99.62±0.02	&	0.00±0.00	&	360.88±0.64	&	99.54±0.01	&	-0.01±0.00	&	353.58±0.98	 \\
CAGNN & 53.74±0.09 & -4.23±0.05 & 1350.25±2.33 & 63.28±0.11 & -2.70±0.02 & 1312.40±2.33 & 69.87±0.05 & -7.10±0.01 & 1244.07±2.98 & 84.81±0.08 & 0.02±0.00 & 1147.44±5.38 & 95.73±0.04 & -0.02±0.00 & 1520.40±8.87 \\
SAFER & 55.01±0.08 & 0.00±0.00 & 562.40±5.54 & 64.78±0.09 & 0.00±0.00 & 500.52±6.33 & 74.79±0.07 & 0.00±0.00 & 549.50±7.53 & 84.59±0.09 & 0.00±0.00 & 501.23±3.77 & 94.44±0.09 & 0.00±0.00 & 450.25±0.33 \\
\model & \textbf{99.64±0.02}	&	0.00±0.00	&	220.10±2.13	&	\textbf{99.67±0.03}	&	0.00±0.00	&	207.72±1.03	&	\textbf{99.54±0.02}	&	0.00±0.00	&	205.74±0.24	&	\textbf{99.69±0.01}	&	0.00±0.00	&	190.65±0.43	&	\textbf{99.55±0.04}	&	0.00±0.00	&	163.76±0.22	 \\
\hline

\hline
\multicolumn{15}{c}{$\textbf{Products}_{mixed}$} \\
\cmidrule(lr){2-16}
Trans: Induct 
& \multicolumn{3}{c}{1:9}
& \multicolumn{3}{c}{3:7}
& \multicolumn{3}{c}{5:5}
& \multicolumn{3}{c}{7:3}
& \multicolumn{3}{c}{9:1} \\
\cmidrule(lr){2-4} \cmidrule(lr){5-7} \cmidrule(lr){8-10} \cmidrule(lr){11-13} \cmidrule(lr){14-16}
Methods
& AP (\%) $\uparrow$ & BWT (\%) $\uparrow$ & Time (secs) $\downarrow$
& AP (\%) $\uparrow$ & BWT (\%) $\uparrow$ & Time (secs) $\downarrow$
& AP (\%) $\uparrow$ & BWT (\%) $\uparrow$ & Time (secs) $\downarrow$
& AP (\%) $\uparrow$ & BWT (\%) $\uparrow$ & Time (secs) $\downarrow$
& AP (\%) $\uparrow$ & BWT (\%) $\uparrow$  & Time (secs) $\downarrow$\\
\hline
CaT     & 18.67±0.43	&	-8.47±0.07	&	2260.04±2.34	&	22.09±0.41	&	-9.28±0.08	&	2214.08±0.86	&	37.95±0.34	&	-6.83±0.07	&	2157.70±6.37	&	48.89±0.08	&	-7.76±0.09	&	1884.42±3.85	&	54.70±0.50	&	-12.60±0.26	&	1681.71±4.77  \\
TPP      &  95.43±0.00	& 	0.00±0.00	& 	1160.70±3.24	& 	87.67±0.13	& 	0.00±0.00	& 	1157.59±7.31	& 91.28±0.00	&	0.00±0.00	&	1093.32±4.24	&	95.71±0.00	&	0.00±0.00	&	1067.47±3.27	&95.68±0.00	&	0.00±0.00	&	957.15±6.82\\
TAAM      &  88.12±0.09	&	-3.29±0.41	&	543.15±0.90	&	85.77±0.22	&	-4.90±0.19	&	539.36±0.74	&	85.94±0.28	&	-4.27±0.06	&	488.85±1.37	&	87.56±0.13	&	-3.33±0.08	&	456.87±0.84	&	85.44±0.40	&	-6.00±0.00	&	404.02±0.39	\\
CAGNN & 23.77±0.09 & -16.04±0.24 & 2935.54±4.22 & 34.86±0.14 & -14.35±0.11 & 2903.24±5.68 & 46.89±0.09 & -12.65±0.21 & 2520.72±5.09 & 71.28±0.08 & -3.91±0.03 & 2378.57±6.30 & 89.57±0.02 & -2.17±0.05 & 2317.29±5.79 \\
SAFER & 49.69±0.07 & -2.14±0.04 & 1229.38±4.27 & 62.36±0.06 & 0.00±0.00 & 1171.61±6.23 & 73.38±0.09 & 0.00±0.00 & 1121.23±5.64 & 84.21±0.07 & 0.00±0.00 & 1067.65±7.61 & 91.31±0.07 & 0.00±0.00 & 1094.03±14.92 \\
\model & \textbf{95.45±0.02}	&	0.00±0.00	&	653.70±0.84	&	\textbf{97.00±0.02}	&	4.46±0.00	&	523.80±0.63	&	\textbf{93.09±0.01}	&	3.82±0.05	&	390.12±0.52	&	\textbf{96.50±0.02}	&	0.00±0.00	&	449.65±0.31	&	\textbf{96.25±0.01}	&	0.00±0.00	&	402.40±0.92	\\
\hline

\end{tabular}
}
\label{tab:results_mixed}
\end{table*}

%% file: tables/ablation.tex
\begin{table}[!ht]
\centering

\caption{Ablation study of \model's four components on CoraFull (transductive) and Kindle (inductive). All rows use seeds 1024--1026, so the Final row differs slightly from Tables~\ref{tab:main_results_transductive} and~\ref{tab:main_results_inductive}. On CoraFull, a misrouted test graph scores 0.}

\resizebox{0.99\columnwidth}{!}{
\begin{tabular}{lcccccc}
\toprule
\multirow{2}{*}{\textbf{Methods}} &
\multicolumn{3}{c}{\textbf{$\text{CoraFull}$}} &
\multicolumn{3}{c}{\textbf{$\text{Kindle}$}}\\
\cmidrule(lr){2-4}\cmidrule(lr){5-7}
 & AP (\%) $\uparrow$ & BWT (\%) $\uparrow$ & Time (s) $\downarrow$
 & AP (\%) $\uparrow$ & BWT (\%) $\uparrow$ & Time (s) $\downarrow$\\
\midrule
\makecell[l]{w/o Laplacian\\ Smoothing}  & 92.93±0.62 &	0.00±0.00 &	56.01±3.13 &	44.41±0.32 &	0.00±0.00 &	8.85±1.16 \\ \midrule
\makecell[l]{w/o Bottleneck}   & 95.65±0.05	 &0.00±0.00	 &36.04±2.90 &	78.48±0.51	 &$-$0.38±0.66	 &4.84±0.69\\ \midrule
\makecell[l]{w/o Community-aware}   & 90.54±0.14	&5.59±0.05	& 54.85±1.76 &	83.09±0.07	&$-$2.89±0.03	&7.48±0.27 \\ \midrule
\makecell[l]{w/o Majority Voting}   & 89.56±1.53	 &0.00±0.00 &	54.98±1.12 &	76.31±0.78	&$-$9.20±0.81	&8.49±0.42  \\ \midrule
\makecell[l]{\model (Final)} & 95.80±0.26  &	0.00±0.00 &	59.60±1.65  &	84.52±1.09 &	$-$0.84±1.45  &	8.30±0.44 \\ \bottomrule
\end{tabular}}
\label{ablation}
\end{table}

%% file: tables/equal_memory.tex
\begin{table}[t]
\centering
\caption{Comparison at matched final storage. Italic: CAGNN in its default configuration, far above the matched storage. CaT (task-IL) is given the true task identifier at test time and is not ranked; its training, storage, and time are those of CaT.}
\label{tab:equal_memory}
\resizebox{0.99\columnwidth}{!}{
\begin{tabular}{llcccc}
\toprule
\textbf{Dataset} & \textbf{Method} & \textbf{Memory (MB)} & \textbf{AP (\%)} $\uparrow$ & \textbf{BWT (\%)} $\uparrow$ & \textbf{Time (s)} $\downarrow$ \\
\midrule
\multirow{9}{*}{\makecell[l]{Arxiv\\($T{=}20$, $d{=}128$)}}
& ER-GNN (budget 40) & 0.4288 & 48.12±1.02 & -16.05±0.36 & 43.24±2.56 \\
& SSM (budget 40)    & 0.4162 & 52.87±0.68 & -16.49±1.11 & 45.38±2.42 \\
& CaT (budget 40)    & 0.4288 & 61.69±2.58 & -12.01±1.74 & 225.76±3.23 \\
& CaT (task-IL) & 0.4288 & 94.28±0.83 & -0.02±0.25 & 225.76±3.23 \\
& TAAM (rank 8)      & 0.3863  & 84.94±0.46 & -1.43±0.43  & 71.51±0.68 \\
& SAFER ($r{=}14$)   & 0.4069  & 93.28±0.17 & \textbf{0.00±0.00} & 151.19±1.60 \\
& TPP ($p{=}17$)     & 0.3834  & 94.57±0.07 & \textbf{0.00±0.00} & 43.66±1.12 \\
& \model ($b{=}13$)  & 0.4006  & \textbf{95.17±0.07} & \textbf{0.00±0.00} & \textbf{38.09±0.76} \\
\cmidrule(lr){2-6}
& \textit{CAGNN (default)} & \textit{825.66} & \textit{93.33±0.11} & \textit{0.00±0.00} & \textit{208.57±9.59} \\
\midrule
\multirow{9}{*}{\makecell[l]{CoraFull\\($T{=}35$, $d{=}8710$)}}
& ER-GNN (budget 40) & 48.3215 & 57.74±0.67 & -8.92±0.47 & 80.46±3.10 \\
& SSM (budget 40)    & 48.3057 & 65.78±0.85 & -7.10±0.27 & 80.78±2.62 \\
& CaT (budget 40)    & 48.8096 & 79.90±0.46 & -4.97±0.65 & 951.63±81.90 \\
& CaT (task-IL) & 48.8096 & 95.82±0.14 & -0.33±0.06 & 951.63±81.90 \\
& TAAM (rank 17)     & 48.1020 & 89.45±0.15 & -0.05±0.05  & 138.99±2.73 \\
& SAFER ($p{=}20$)   & 48.0664 & 95.29±0.16 & \textbf{0.00±0.00} & 259.06±5.45 \\
& TPP ($p{=}19$)     & 46.3072 & 89.46±0.55 & \textbf{0.00±0.00} & 74.68±2.20 \\
& \model ($b{=}13$)  & 47.5587 & \textbf{95.72±0.02} & \textbf{0.00±0.00} & \textbf{67.81±1.18} \\
\cmidrule(lr){2-6}
& \textit{CAGNN (default)} & \textit{3004.06} & \textit{94.66±0.07} & \textit{0.00±0.00} & \textit{372.10±16.19} \\
\bottomrule
\end{tabular}}
\end{table}

%% file: tables/routing_correlation.tex
\begin{table}[t]
\centering
\caption{Routing accuracy of \model on the four transductive benchmarks: graph- and node-level routing accuracy (\%) and edge-error correlation (Pearson correlation of node routing errors between edge endpoints). Pooled covers all 98 test graphs over three runs; its correlation is the mean over the four benchmarks.}
\label{tab:routing_corr}
\resizebox{0.75\columnwidth}{!}{
\begin{tabular}{lccc}
\toprule
\textbf{Dataset} & \makecell{\textbf{Graph-level}\\\textbf{routing (\%)}} & \makecell{\textbf{Node-level}\\\textbf{routing (\%)}} & \makecell{\textbf{Edge-error}\\\textbf{correlation}} \\
\midrule
CoraFull & 100.00 & 75.35 & 0.899 \\
Arxiv    & 100.00 & 55.46 & 0.354 \\
Reddit   & 100.00 & 97.10 & 0.253 \\
Products & 100.00 & 60.55 & 0.820 \\
\midrule
Pooled   & 100.00 & 72.26 & 0.581 \\
\bottomrule
\end{tabular}}
\end{table}

%% file: tables/mixed_node_routing.tex
\begin{table}[t]
\centering
\caption{Mixed-task test graphs on Arxiv and Reddit ($T{=}20$): AP$\uparrow$ on single-task (Native) and mixed-task (Mixed) test graphs, and running time (s). Seeds 1042--1044 differ from those of Table~\ref{tab:main_results_transductive}. Replay buffers hold 29 (Arxiv) and 40 (Reddit) nodes per task. TWP uses two GAT heads on Reddit (eight run out of memory). \model's Native AP is from a run without mixed evaluation, with its default graph-level vote.}
\label{tab:mixed_node}
\resizebox{0.99\columnwidth}{!}{
\begin{tabular}{lcccccc}
\toprule
\multirow{2}{*}{\textbf{Method}} & \multicolumn{3}{c}{\textbf{Arxiv}} & \multicolumn{3}{c}{\textbf{Reddit}} \\
\cmidrule(lr){2-4}\cmidrule(lr){5-7}
 & Native AP & Mixed AP & Time (s) & Native AP & Mixed AP & Time (s) \\
\midrule
Finetuning & 4.92±0.00  & 1.92±0.00  & 69.50±1.08  & 4.96±0.01  & 3.50±0.01  & 532.94±1.63 \\
EWC        & 5.00±0.11  & 1.96±0.08  & 149.35±1.20 & 6.42±1.21  & 4.73±1.14  & 608.91±2.15 \\
MAS        & 5.13±0.12  & 2.44±0.13  & 114.69±29.69 & 8.61±0.26  & 7.31±0.42  & 569.80±2.80 \\
TWP        & 5.65±1.83  & 8.21±2.61  & 318.71±1.28 & 2.05±3.26        & 1.79±1.45         & 3247.51±252.87     \\
ER-GNN     & 27.65±0.24 & 28.79±0.58 & 67.72±1.26  & 30.13±4.59 & 27.52±5.80 & 539.59±10.67 \\
SSM        & 32.27±1.01 & 33.52±1.06 & 68.48±0.97  & 61.93±2.75 & 55.80±1.10 & 550.62±28.38 \\
CaT        & 40.27±0.31 & 40.12±0.07 & 310.90±26.38 & 62.13±4.13 & 53.76±4.23 & 1534.39±26.44 \\
TPP        & 94.42±0.14 & 10.00±0.03 & 49.63±1.52 & 99.29±0.04 & 72.87±0.00 & 461.47±5.08 \\
SAFER      & 93.28±0.10 & 28.16±0.26 & 137.57±2.47 & 99.56±0.02 & 59.32±1.71 & 759.34±3.29 \\
CAGNN      & 93.39±0.19 & 11.06±0.03 & 261.73±2.61 & \textbf{99.62±0.02} & 70.47±0.01 & 2093.53±319.58 \\
TAAM       & 83.29±1.55 & 37.54±0.11 & 61.33±2.65 & 99.58±0.02 & 50.70±0.01 & 464.84±1.84 \\
\model     & \textbf{94.90±0.12} & \textbf{55.86±0.35} & \textbf{31.74±1.33} & 99.58±0.01 & \textbf{96.59±0.06} & \textbf{244.89±2.03} \\
\midrule
Joint (reference) & 60.47±0.17 & 69.33±0.11 & 276.08±0.99 & 98.03±0.00 & 98.32±0.00 & 4753.71±20.12 \\
\bottomrule
\end{tabular}}
\end{table}

%% file: tables/noisy_edges.tex
\begin{table}[t]
\centering
\caption{Robustness to random edge noise: AP$\uparrow$ after adding random undirected edges numbering $x\%$ of each task's edge list. Single run (seed 42, 200 epochs), so the 0\% column differs from Tables~\ref{tab:main_results_transductive} and~\ref{tab:main_results_inductive} (by up to 2.61 points for \model).}
\label{tab:noisy}
\resizebox{0.8\columnwidth}{!}{
\begin{tabular}{llccccc}
\toprule
\textbf{Dataset} & \textbf{Method} & \textbf{0\%} & \textbf{+10\%} & \textbf{+20\%} & \textbf{+30\%} & \textbf{+50\%} \\
\midrule
\multirow{5}{*}{CoraFull} & TPP & 90.57 & 87.99 & 85.90 & 83.61 & 80.20 \\
                          & TAAM & 89.65 & 87.61 & 85.78 & 85.92 & 83.00 \\
                          & SAFER & 94.86 & 90.49 & 86.94 & 86.35 & 82.54 \\
                          & CAGNN & 94.48 & 87.56 & 85.92 & 85.38 & 83.37 \\
                          & \model & \textbf{95.43} & \textbf{90.53} & \textbf{88.03} & \textbf{88.03} & \textbf{85.25} \\
\midrule
\multirow{5}{*}{Arxiv} & TPP & 94.58 & 94.20 & 93.28 & 92.35 & 90.93 \\
                          & TAAM & 82.28 & 84.86 & 80.89 & 81.31 & 85.00 \\
                          & SAFER & 93.34 & 93.79 & 92.62 & 91.80 & 90.10 \\
                          & CAGNN & 92.72 & 92.31 & 91.04 & 89.78 & 88.64 \\
                          & \model & \textbf{94.86} & \textbf{95.17} & \textbf{94.34} & \textbf{93.57} & \textbf{92.59} \\
\midrule
\multirow{5}{*}{ACM}      & TPP  & 70.71 & 68.49 & 67.83 & 66.69 & 64.14 \\
                          & TAAM & 71.54 & 68.36 & 66.64 & 65.91 & 60.78 \\
                          & SAFER & 74.40 & 71.69 & 66.70 & 68.33 & 66.65 \\
                          & CAGNN & 71.89 & 69.93 & 69.91 & 70.95 & 66.70 \\
                          & \model & \textbf{74.57} & \textbf{76.48} & \textbf{72.76} & \textbf{72.93} & \textbf{70.27} \\
\midrule
\multirow{5}{*}{DBLP} & TPP & \textbf{83.40} & 79.31 & 77.29 & 75.74 & 73.11 \\
                          & TAAM & 81.61 & 79.03 & 78.34 & 77.07 & 76.89 \\
                          & SAFER & 81.00 & 78.19 & 77.24 & 76.85 & 75.18 \\
                          & CAGNN & 77.69 & 74.09 & 74.90 & 71.08 & 64.79 \\
                          & \model & 82.45 & \textbf{80.74} & \textbf{79.91} & \textbf{79.04} & \textbf{78.83} \\
\bottomrule
\end{tabular}}
\end{table}

%% file: tables/modularity_q.tex
\begin{table}[t]
\centering
\caption{Pearson $r$ and Spearman $\rho$ (two-sided $p$) between per-task AP and modularity $Q$ for \model. Baselines: number of baselines (TPP, TAAM, SAFER, CAGNN) whose AP--$Q$ correlation reaches $p<0.05$ (Pearson / Spearman).}
\label{tab:modularity}
\resizebox{0.95\columnwidth}{!}{
\begin{tabular}{lcccccc}
\toprule
\textbf{Dataset} & \textbf{Tasks} & \textbf{$Q$ range} & \textbf{$r$ ($p$)} & \textbf{$\rho$ ($p$)} & \textbf{Baselines} \\
\midrule
CoraFull & 35 & 0.04--0.50 & 0.24 (0.16) & 0.28 (0.11) & 0/4 \,/\, 0/4 \\
Arxiv    & 20 & 0.00--0.50 & $-$0.12 (0.60) & $-$0.38 (0.10) & 0/4 \,/\, 0/4 \\
Reddit   & 20 & 0.04--0.50 & 0.14 (0.55) & 0.26 (0.26) & 0/3 \,/\, 0/3 \\
Products & 23 & 0.00--0.50 & 0.11 (0.61) & 0.49 (0.02) & 0/3 \,/\, 3/3 \\
\midrule
Kindle   & 5  & 0.40--0.66 & $-$0.85 (0.07) & $-$0.60 (0.28) & 3/4 \,/\, 1/4 \\
DBLP     & 10 & 0.31--0.48 & 0.62 (0.06) & 0.85 ($<$0.01) & 1/4 \,/\, 3/4 \\
ACM      & 10 & 0.05--0.47 & $-$0.17 (0.64) & 0.07 (0.85) & 0/4 \,/\, 0/4 \\
\bottomrule
\end{tabular}}
\end{table}

%% file: tables/runtime_breakdown.tex
\begin{table*}[t]
\centering
\caption{Running-time breakdown of \model: wall-clock time of each component in seconds, with its share of the total in parentheses (NVIDIA RTX A4000). \textit{Re-eval.} evaluates all earlier tasks after each new task and is needed only for AP and BWT.}
\label{tab:runtime}
\resizebox{0.88\textwidth}{!}{
\begin{tabular}{llc|ccccccccc|c}
\toprule
\textbf{Dataset} & \textbf{Setting} & \textbf{Tasks} & \makecell{\textbf{Data}\\\textbf{transfer}} & \makecell{\textbf{Laplacian}\\\textbf{smoothing}} & \makecell{\textbf{Community}\\\textbf{detection}} & \makecell{\textbf{Prototype}\\\textbf{extraction}} & \makecell{\textbf{Adapter}\\\textbf{training}} & \textbf{Routing} & \textbf{Classification} & \makecell{\textbf{Re-eval.}\\\textbf{(protocol)}} & \textbf{Other} & \textbf{Total} \\
\midrule
CoraFull & Trans. & 35 & 0.41 (0.6\%) & 0.21 (0.3\%) & 0.18 (0.3\%) & 0.06 (0.1\%) & 52.46 (82.9\%) & 0.27 (0.4\%) & 0.05 (0.1\%) & 8.80 (13.9\%) & 0.85 (1.3\%) & 63.30 \\
Arxiv & Trans. & 20 & 0.09 (0.3\%) & 0.12 (0.4\%) & 2.69 (8.9\%) & 0.12 (0.4\%) & 25.44 (83.8\%) & 0.07 (0.2\%) & 0.02 (0.1\%) & 1.24 (4.1\%) & 0.57 (1.9\%) & 30.35 \\
Reddit & Trans. & 20 & 2.31 (1.2\%) & 1.64 (0.9\%) & 133.99 (70.1\%) & 0.12 (0.1\%) & 34.03 (17.8\%) & 0.09 (0.0\%) & 0.04 (0.0\%) & 18.13 (9.5\%) & 0.85 (0.4\%) & 191.20 \\
Products & Trans. & 23 & 3.72 (0.9\%) & 1.91 (0.5\%) & 294.64 (69.6\%) & 1.57 (0.4\%) & 66.09 (15.6\%) & 0.10 (0.0\%) & 0.06 (0.0\%) & 54.60 (12.9\%) & 0.81 (0.2\%) & 423.51 \\
Kindle & Ind. & 5 & 0.11 (1.1\%) & 0.12 (1.2\%) & 0.68 (6.9\%) & 0.02 (0.2\%) & 8.35 (84.3\%) & 0.01 (0.1\%) & 0.01 (0.1\%) & 0.13 (1.3\%) & 0.48 (4.9\%) & 9.91 \\
DBLP & Ind. & 10 & 0.13 (0.6\%) & 0.12 (0.6\%) & 4.55 (21.9\%) & 0.05 (0.2\%) & 14.95 (71.9\%) & 0.03 (0.1\%) & 0.01 (0.1\%) & 0.43 (2.1\%) & 0.51 (2.5\%) & 20.78 \\
ACM & Ind. & 10 & 0.53 (1.8\%) & 0.18 (0.6\%) & 1.42 (4.8\%) & 0.05 (0.2\%) & 25.78 (87.4\%) & 0.03 (0.1\%) & 0.03 (0.1\%) & 0.95 (3.2\%) & 0.52 (1.8\%) & 29.48 \\
\bottomrule
\end{tabular}}
\end{table*}

%% file: section/ch5_related_works.tex
\section{Related Work}

\noindent\textbf{Graph data systems and model maintenance.} Graph processing systems manage large, evolving graphs~\cite{sakr2021future}, and graph streaming frameworks apply edge insertions and deletions to the stored graph as they arrive~\cite{besta2023practice}. Systems for learning on such graphs train temporal GNNs on billion-scale graphs~\cite{zhou2022tgl}, train dynamic GNNs over sliding windows of a graph stream~\cite{chen2023neutronstream}, update node representations incrementally in a distributed dataflow~\cite{guliyev2024d3gnn}, decouple graph propagation from prediction on large dynamic graphs~\cite{zheng2023decoupled}, or compress the graph so that GNN inference runs on a smaller one~\cite{fan2025ifgc}. These systems keep the evolving graph, or a compressed copy of it, in storage and predict over a single graph rather than over classes that arrive task by task. Data management research also mitigates the impact of data drift on machine learning pipelines~\cite{dong2024datadrift} and minimizes the data collected for machine learning over Internet of Things (IoT) data streams~\cite{shaowang2025dataminimization}. \model addresses the complementary problem of maintaining a model learned from an evolving graph without retaining \pd{its own} copy of past graph data.

\noindent\textbf{Continual graph learning.} 
Classical continual learning methods fall into three families: regularization (EWC~\cite{Kirkpatrick2017ewc}, MAS~\cite{Aljundi2018mas}), memory replay (GEM~\cite{lopez2017gem}), and parameter isolation (Progressive Nets~\cite{rusu2016progressive}). These have been extended to graphs through topology-aware regularization (TWP~\cite{liu2021twp}); replay of exemplar, sparsified, or condensed subgraphs (ER-GNN~\cite{zhou2021ergnn}, SSM~\cite{zhang2022ssm}, CaT~\cite{liu2023cat}, PUMA~\cite{liu2025puma}); topology-aware embedding memory (PDGNN~\cite{zhang2024pdgnn}); and topology-aware coarsening (TACO~\cite{han2024taco}). Recent replay-free frameworks use spectral profiles plus graph prompts (TPP~\cite{niu2024tpp}), neural synapse modulators (TAAM~\cite{liu2026taam}), hierarchical parameter masking (CAGNN~\cite{xiao2026cagnn}), or subspace-aware feature reshaping (SAFER~\cite{zhang2026safer}). Unlike replay-based methods, which store raw, condensed, or coarsened graphs and replay them in later training (e.g., CaT and TACO), \model spends a constant amount of storage per task on an adapter and at most $M$ community-level prototypes; the prototypes are fixed statistics used only to route test graphs, never to retrain. Unlike existing replay-free methods, it avoids any pre-trained or per-task GNN backbone, yielding substantially lower training cost (Secs.~\ref{subsec:complexity} and~\ref{main-indivisual}). \model precomputes the $K$-hop smoothing outside the epoch loop (Sec.~\ref{sec:sgc_extraction}), whereas TPP optimizes graph prompts within it, summarizes each task by a single mean, and stores prototypes per task; TACO maintains one coarsened graph of all previous tasks. TAAM keeps a frozen SGC but runs node-level modulators in every forward pass, and PDGNN and PUMA precompute propagation, as \model does, but keep a replay buffer. Appendix~D.1 compares \model with TPP and TAAM.

\noindent\textbf{Community detection.} \model requires only a disjoint partition of nodes; we default to Louvain~\cite{blondel2008fast} and verify that Leiden~\cite{traag2019from}, Infomap~\cite{rosvall2008maps}, Fast-Greedy~\cite{clauset2004finding}, and Label Propagation~\cite{raghavan2007near} all preserve \model's superiority (Sec.~\ref{diverse-community-detection}). Parallel community detection and graph clustering frameworks scale such partitioning to large graphs~\cite{shi2021scalable,yu2024pcbs}, and \model can use them without change. Community structure has also been used in static GRL via Community-as-a-Node abstractions~\cite{park2023two,park2024tkde} and multi-level GRL~\cite{lim2025mlgrl}; unlike these reusable hierarchical representations, \model partitions each task graph and distills the result into at most $M$ prototypes per task for replay-free task identification.

%% file: section/ch6_conclusion.tex
\section{Conclusion}
\label{sec:conclusion}

We introduced \model, a replay-free CGL framework that unifies task learning, memory, and inference around a single parameter-free structural prior computed once per task via SGC. On top of this shared prior, \model trains a lightweight task-specific bottleneck adapter, extracts at most $M$ non-learned community-level prototypes per task, and, at inference time, identifies the active task by graph-level majority voting over the prototype memory. This design avoids the memory cost of replay and the recursive message passing of GNN-based methods, and is accurate and efficient across transductive, inductive, and mixed task streams. \model maintains a model over an evolving graph with constant storage per task and without keeping its own copy of past graph data.

\vspace*{-0.1cm}
\smallskip\noindent\textbf{Limitations and future work.} The default protocol assumes one task per test graph; the node-level variant (Sec.~\ref{sec:eval_routing}) still takes the graph boundary from the stream and depends on per-node routing, under which Mixed AP on Arxiv drops to 55.86\%. The design assumes homophilic graphs with community structure; Sec.~\ref{sec:eval_robust} shows robustness within this scope but does not test heterophilic graphs. Storage and matching costs grow linearly with the number of tasks (Eq.~\eqref{eq:memory}), and on the largest graphs, CPU-based Louvain takes about 70\% of the running time (Table~\ref{tab:runtime}). Future work will extend \model to heterogeneous graphs, to fully streaming regimes without explicit task boundaries, and to few-shot or semi-supervised signals for reducing the labeled data required per new task.

\section*{Use of Generative AI}
Generative AI tools were used to polish the writing and to help write scripts that check the reported numbers against the experiment logs; the method, experiments, and results are the authors' own.

%% file: tables/topm_bound.tex
\begin{table}[h]
\centering
\caption{Top-$M$ truncation at $M=10$: measured gap $\hat{\epsilon}_M$, its upper bound $2\,\mathbb{E}_{v \in V_{test}}[r(v)]$ from Corollary~1, and $\hat{\epsilon}_M$ relative to the mean pairwise squared distance between community prototypes.}
\label{tab:topm_bound}
\begin{tabular}{lccc}
\toprule
Dataset & $\hat{\epsilon}_M$ & Bound & Relative gap \\
\midrule
CoraFull & 0.1745 & 0.4906 & 9.80\% \\
Arxiv & 0.0191 & 0.2640 & 4.87\% \\
Reddit & 0.0103 & 0.0718 & 0.57\% \\
Products & 0.3300 & 1.2292 & 17.32\% \\
\bottomrule
\end{tabular}
\end{table}